\newcommand\blfootnote[1]{%
  \begingroup
  \renewcommand\thefootnote{}\footnote{#1}%
  \addtocounter{footnote}{-1}%
  \endgroup
}
\definecolor{emerald}{HTML}{2ecc71}
\definecolor{amethyst}{HTML}{9b59b6}
\definecolor{sunflower}{HTML}{f1c40f}
\definecolor{pumpkin}{HTML}{d35400}
\definecolor{concrete}{HTML}{95a5a6}
\definecolor{turquoise}{HTML}{1abc9c}
\definecolor{pomegranate}{HTML}{c0392b}
\definecolor{belizehole}{HTML}{2980b9}
\definecolor{wetasphalt}{HTML}{34495e}
\definecolor{lightsunflower}{HTML}{FFEC67}
\definecolor{cloud}{HTML}{ecf0f1}
\newcommand{\pnorm}[1]{\| #1 \|_p} 
\newcommand{\panorm}[1]{| #1 |^p} 
\newcommand{\cX}{\mathcal{X}}
\newcommand{\cF}{\mathcal{F}}
\newcommand{\cA}{\mathcal{A}}
\newcommand{\cZ}{\mathcal{Z}}
\newcommand{\cT}{\mathcal{T}}
\newcommand{\cG}{\mathcal{G}}
\newcommand{\cL}{\mathcal{L}}
\newcommand{\bR}{\mathbb{R}}
\newcommand{\bN}{\mathbb{N}}
\newcommand{\indic}[1]{\mathbb{I}\left [ #1 \right ]}
\DeclareMathOperator*{\expect}{{\huge \mathbb{E}}}
\newcommand{\expects}{\expect\nolimits}
\newcommand{\Var}{\mathbb{V}}
\newcommand{\infnorm}[1]{\left \| #1 \right \|_{\infty}}
\newtheorem{defn}{Definition}
\newtheorem{prop}{Proposition}
\newtheorem{lem}{Lemma}
\newtheorem{thm}{Theorem}
\newcommand{\cbar}{\, | \,}
\newcommand{\cdbar}{\, \| \,}
\DeclareMathOperator*{\argmax}{arg\,max}
\newcommand{\eqnref}[1]{(\ref{eqn:#1})}
\icmltitlerunning{A Distributional Perspective on Reinforcement Learning}
\def \Ddef {\overset{D}{:=}}
\def \Deq {\overset{D}{=}}
\def \cTpi {\cT^\pi}
\def \dip {\bar d_p}
\def \Vrange {B}
\def \cXk {\cX_k}
\def \Vmax {V_{\textsc{max}}}
\def \Vmin {V_{\textsc{min}}}
\def \bu {\mathbf{u}}
\begin{document} 

\twocolumn[
\icmltitle{A Distributional Perspective on Reinforcement Learning}

\icmlsetsymbol{equal}{*}

\begin{icmlauthorlist}
\icmlauthor{Marc G. Bellemare}{equal,to}
\icmlauthor{Will Dabney}{equal,to}
\icmlauthor{R\'emi Munos}{to}
\end{icmlauthorlist}

\icmlaffiliation{to}{DeepMind, London, UK}
\icmlcorrespondingauthor{Marc G. Bellemare}{bellemare@google.com}

\vskip 0.3in
] 
\printAffiliationsAndNotice{\icmlEqualContribution} %

\begin{abstract}
In this paper we argue for the fundamental importance of the \emph{value distribution}: the distribution of the random return received by a reinforcement learning agent. This is in contrast to the common approach to reinforcement learning which models the expectation of this return, or \emph{value}. Although there is an established body of literature studying the value distribution, thus far it has always been used for a specific purpose such as implementing risk-aware behaviour.
We begin with theoretical results in both the policy evaluation and control settings, exposing a significant distributional instability in the latter.
We then use the distributional perspective to design a new algorithm which applies Bellman's equation to the learning of approximate value distributions.
We evaluate our algorithm using the suite of games from the Arcade Learning Environment. We obtain both state-of-the-art results and anecdotal evidence demonstrating the importance of the value distribution in approximate reinforcement learning.
Finally, we combine theoretical and empirical evidence to highlight the ways in which the value distribution impacts learning in the approximate setting.
\end{abstract} 

\section{Introduction}
One of the major tenets of reinforcement learning states that, when not otherwise constrained in its behaviour, an agent should aim to maximize its expected utility $Q$, or \emph{value} \citep{sutton98reinforcement}. Bellman's equation succintly describes this value in terms of the expected reward and expected outcome of the random transition $(x, a) \hspace{-0.1em} \to \hspace{-0.1em}  (X', A')$:
\begin{equation*}
Q(x,a) = \expect R(x,a) + \gamma \expect Q(X', A') .
\end{equation*}
In this paper, we aim to go beyond the notion of value and argue in favour of a distributional perspective on reinforcement learning.
Specifically, the main object of our study is the random return $Z$ whose expectation is the value $Q$. This random return is also described by a recursive equation, but one of a distributional nature:
\begin{equation*}
Z(x,a) \Deq R(x,a) + \gamma Z(X', A').
\end{equation*}
The \emph{distributional Bellman equation} states that the distribution of $Z$ is characterized by the interaction of three random variables: the reward $R$, the next state-action $(X', A')$, and its random return $Z(X', A')$. By analogy with the well-known case, we call this quantity the \emph{value distribution}.

Although the distributional perspective is almost as old as Bellman's equation itself \citep{jaquette73markov,sobel82variance,white88mean}, in reinforcement learning it has thus far been subordinated to specific purposes: to model parametric uncertainty \citep{dearden98bayesian}, to design risk-sensitive algorithms \citep{morimura2010nonparametric,morimura10parametric}, or for theoretical analysis \citep{azar12sample,lattimore12pac}.
By contrast, we believe the value distribution has a central role to play in reinforcement learning.

\textbf{Contraction of the policy evaluation Bellman operator.} Basing ourselves on results by \citet{rosler92fixed} we show that, for a fixed policy, the Bellman operator over value distributions is a contraction in a maximal form of the Wasserstein (also called Kantorovich or Mallows) metric. Our particular choice of metric matters: the same operator is not a contraction in total variation, Kullback-Leibler divergence, or Kolmogorov distance.

\textbf{Instability in the control setting.} We will demonstrate an instability in the distributional version of Bellman's optimality equation, in contrast to the policy evaluation case. Specifically, although the optimality operator is a contraction in expected value (matching the usual optimality result), it is not a contraction in any metric over distributions.
These results provide evidence in favour of learning algorithms that model the effects of nonstationary policies.

\textbf{Better approximations.} From an algorithmic standpoint, there are many benefits to learning an approximate distribution rather than its approximate expectation.
The distributional Bellman operator preserves multimodality in value distributions, which we believe leads to more stable learning. Approximating the full distribution also mitigates the effects of learning from a nonstationary policy. As a whole, we argue that this approach makes approximate reinforcement learning significantly better behaved.%

We will illustrate the practical benefits of the distributional perspective in the context of the Arcade Learning Environment \citep{bellemare13arcade}. By modelling the value distribution within a DQN agent \citep{mnih15nature}, we obtain considerably increased performance across the gamut of benchmark Atari 2600 games, and in fact achieve state-of-the-art performance on a number of games. Our results echo those of \citet{veness15compress}, who obtained extremely fast learning by predicting Monte Carlo returns.

From a supervised learning perspective, learning the full value distribution might seem obvious: why restrict ourselves to the mean? The main distinction, of course, is that in our setting there are no given targets. Instead, we use Bellman's equation to make the learning process tractable; we must, as \citet{sutton98reinforcement} put it, ``learn a guess from a guess''. It is our belief that this guesswork ultimately carries more benefits than costs.

\section{Setting}

We consider an agent interacting with an environment in the standard fashion: at each step, the agent selects an action based on its current state, to which the environment responds with a reward and the next state. We model this interaction as a time-homogeneous Markov Decision Process $(\cX, \cA, R, P, \gamma)$. As usual, $\cX$ and $\cA$ are respectively the state and action spaces, $P$ is the transition kernel $P(\cdot \cbar x, a)$, $\gamma \in [0, 1]$ is the discount factor, and
$R$ is the reward function, which in this work we explicitly treat as a random variable. A stationary policy $\pi$ maps each state $x \in \cX$ to a probability distribution over the action space $\cA$.

\subsection{Bellman's Equations}

The \emph{return} $Z^\pi$ is the sum of discounted rewards along the agent's trajectory of interactions with the environment. The value function $Q^\pi$ of a policy $\pi$ describes the expected return from taking action $a \in \cA$ from state $x \in \cX$, then acting according to $\pi$:
\begin{align}
Q^\pi(x,a) &:= \expect Z^\pi(x,a) = \expect \left [ \sum_{t=0}^\infty \gamma^t R(x_t, a_t) \right ], \label{eqn:random_return} \\
x_t \sim P(\cdot &\cbar x_{t-1}, a_{t-1}), a_t \sim \pi(\cdot \cbar x_t), x_0 = x, a_0 = a . \nonumber
\end{align}
Fundamental to reinforcement learning is the use of Bellman's equation  \citep{bellman57dynamic} to describe the value function:
\begin{equation*}
Q^\pi(x,a) = \expect R(x,a) + \gamma \expect_{P, \pi} Q^\pi(x', a') .
\end{equation*}
In reinforcement learning we are typically interested in acting so as to maximize the return. The most common approach for doing so involves the optimality equation
\begin{equation*}
Q^*(x,a) = \expect R(x,a) + \gamma \expect\nolimits_P \max_{a' \in \cA} Q^*(x', a') .
\end{equation*}
This equation has a unique fixed point $Q^*$, the optimal value function, corresponding to the set of optimal policies $\Pi^*$ ($\pi^*$ is optimal if $\expect_{a \sim \pi^*} Q^*(x, a) = \max_a Q^*(x,a)$).

We view value functions as vectors in $\bR^{\cX \times \cA}$, and the expected reward function as one such vector. In this context, the \emph{Bellman operator} $\cT^\pi$ and \emph{optimality operator} $\cT$ are
\begin{align}
\cT^\pi Q(x,a) &:= \expect R(x,a) + \gamma \expect_{P, \pi} Q(x',a')\label{eqn:bellman_operator_pe} \\
\cT Q(x,a) &:= \expect R(x,a) + \gamma \expects_{P} \max_{a' \in \cA} Q(x', a') . \label{eqn:bellman_operator_optimality}
\end{align}
These operators are useful as they describe the expected behaviour of popular learning algorithms such as SARSA and Q-Learning. In particular they are both contraction mappings, and their repeated application to some initial $Q_0$ converges exponentially to $Q^\pi$ or $Q^*$, respectively \citep{bertsekas96neurodynamic}.

\section{The Distributional Bellman Operators}\label{sec:theory}
\begin{figure}
\begin{center}
\includegraphics[width=0.48\textwidth]{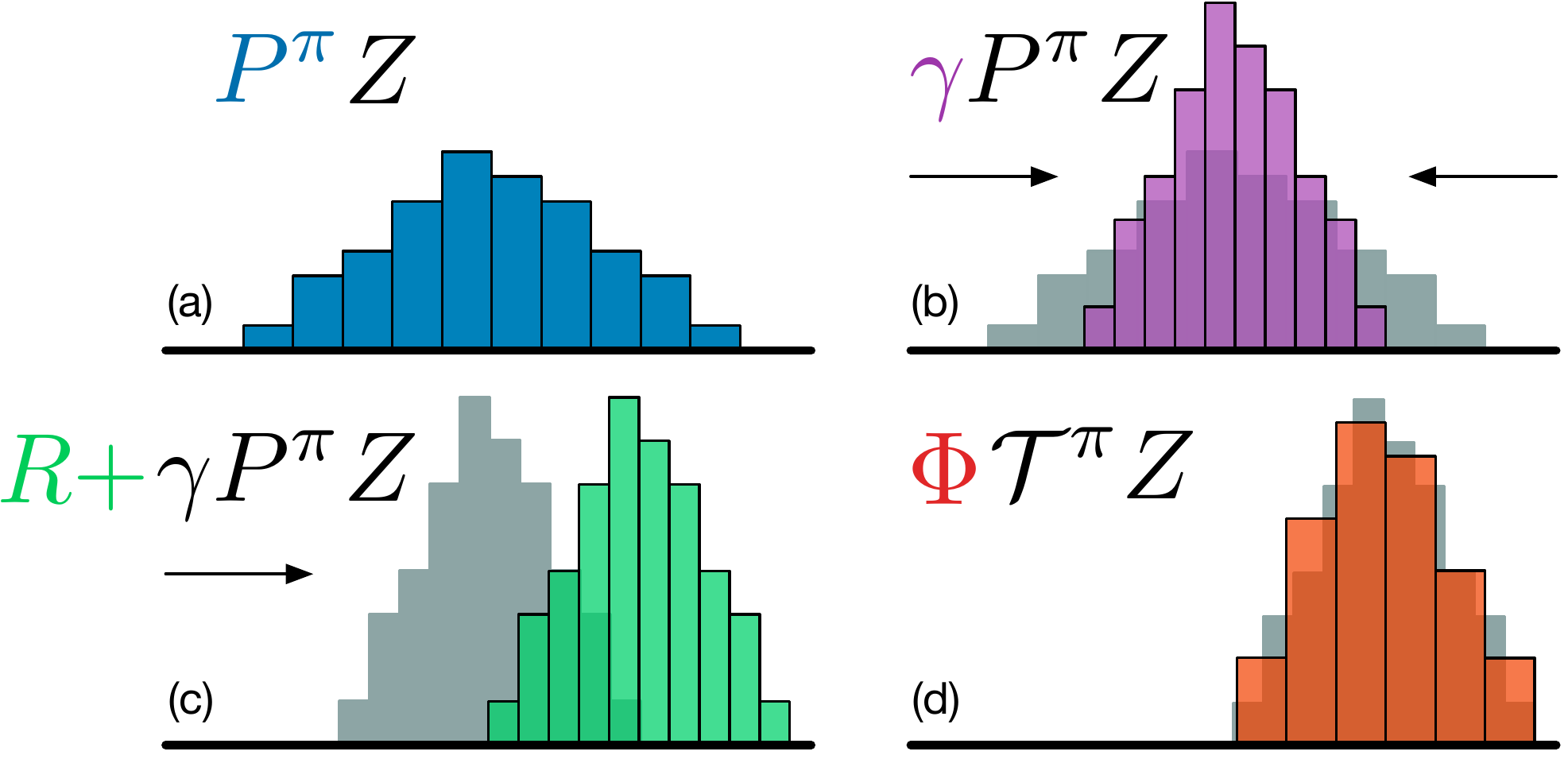}
\end{center}
\caption{A distributional Bellman operator with a deterministic reward function: (a) Next state distribution under policy $\pi$, (b) Discounting shrinks the distribution towards 0, (c) The reward shifts it, and (d) Projection step (Section \ref{sec:algorithm}). \label{fig:bellman_dist}}
\end{figure}

In this paper we take away the expectations inside Bellman's equations and consider instead the full distribution of the random variable $Z^\pi$. From here on, we will view $Z^\pi$ as a mapping from state-action pairs to distributions over returns, and call it the \emph{value distribution}. 

Our first aim is to gain an understanding of the theoretical behaviour of the distributional analogues of the Bellman operators, in particular in the less well-understood control setting. The reader strictly interested in the algorithmic contribution may choose to skip this section.

\subsection{Distributional Equations}

It will sometimes be convenient to make use of the probability space $(\Omega, \cF, \Pr)$. The reader unfamiliar with measure theory may think of $\Omega$ as the space of all possible outcomes of an experiment \citep{billingsley95probability}.
We will write $\pnorm{\bu}$ to denote the $L_p$ norm of a vector $\bu \in \bR^\cX$ for $1 \le p \le \infty$; the same applies to vectors in $\bR^{\cX \times \cA}$. The $L_p$ norm of a random vector $U : \Omega \to \bR^\cX$ (or $\bR^{\cX \times \cA}$) is then $\pnorm{U} := \left [ \expect \left [ \pnorm{U(\omega)}^p \right ] \right ]^{1/p}$, and for $p = \infty$ we have $\| U \|_\infty = \textrm{ess}\sup{\|U(\omega)\|_\infty}$ (we will omit the dependency on $\omega \in \Omega$ whenever unambiguous).
We will denote the c.d.f. of a random variable $U$ by $F_U(y) := \Pr\{ U \le y \}$, and its inverse c.d.f. by $F^{-1}_U(q) := \inf \{ y : F_U(y) \ge q \}$.

A distributional equation $U \Ddef V$ indicates that the random variable $U$ is distributed according to the same law as $V$. Without loss of generality, the reader can understand the two sides of a distributional equation as relating the distributions of two independent random variables.
Distributional equations have been used in reinforcement learning by \citet{engel05reinforcement,morimura10parametric} among others, and in operations research by \citet{white88mean}.

\subsection{The Wasserstein Metric}

The main tool for our analysis is the Wasserstein metric $d_p$ between cumulative distribution functions \citep[see e.g.][where it is called the Mallows metric]{bickel81asymptotic}. For $F$, $G$ two c.d.fs over the reals, it is defined as
\begin{equation*}
d_p(F, G) := \inf_{U, V} \pnorm{U - V},
\end{equation*}
where the infimum is taken over all pairs of random variables $(U, V)$ with respective cumulative distributions $F$ and $G$. The infimum is attained by the inverse c.d.f. transform of a random variable $\mathcal{U}$ uniformly distributed on $[0, 1]$:
\begin{equation*}
d_p(F, G) = \| F^{-1}(\mathcal{U}) - G^{-1}(\mathcal{U}) \|_p .
\end{equation*}
For $p < \infty$ this is more explicitly written as
\begin{equation*}
d_p(F, G) = \left ( \int_0^1 \big | F^{-1}(u) - G^{-1}(u) \big |^p du \right )^{1/p} .
\end{equation*}
Given two random variables $U$, $V$ with c.d.fs $F_U$, $F_V$, we will write $d_p(U, V) := d_p(F_U, F_V)$. We will find it convenient to conflate the random variables under consideration with their versions under the $\inf$, writing
\begin{equation*}
d_p(U, V) = \inf_{U,V} \pnorm{U - V} .
\end{equation*}
whenever unambiguous; we believe the greater legibility justifies the technical inaccuracy.
Finally, we extend this metric to vectors of random variables, such as value distributions, using the corresponding $L_p$ norm.

Consider a scalar $a$ and a random variable $A$ independent of $U, V$. The metric $d_p$ has the following properties:
\begin{align*}
d_p(aU, aV) &\le |a| d_p(U, V) \tag{P1} \\
d_p(A + U, A + V) &\le d_p(U, V) \tag{P2} \\
d_p(AU, AV) & \le \pnorm{A} d_p(U, V). \tag{P3}
\end{align*}
We will need the following additional property, which makes no independence assumptions on its variables.
Its proof, and that of later results, is given in the appendix.
\begin{lem}[Partition lemma]\label{lem:partition_lemma}
Let $A_1, A_2, \dots$ be a set of random variables describing a partition of $\Omega$, i.e. $A_i(\omega) \in \{0, 1\}$ and for any $\omega$ there is exactly one $A_i$ with $A_i(\omega) = 1$. Let $U, V$ be two random variables. Then
\begin{equation*}
d_p\big (U, V \big) \le \sum\nolimits_i d_p (A_i U, A_i V) .
\end{equation*}
\end{lem}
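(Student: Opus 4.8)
The plan is to work directly from the coupling definition $d_p(U,V) = \inf \|U-V\|_p$ and to build a single coupling of $U$ and $V$ out of the per-cell optimal couplings furnished by the quantities $d_p(A_iU, A_iV)$ on the right-hand side. The partition $\{A_i\}$ is exactly what makes this possible: since precisely one $A_i$ fires at each $\omega$, we have the pointwise decompositions $U = \sum_i A_iU$ and $V = \sum_i A_iV$ into pieces with pairwise disjoint supports, and conditioning on ``which cell we are in'' splits $U$ and $V$ into the mixtures $\sum_i \Pr(A_i=1)\,\mu_i^U$ and $\sum_i \Pr(A_i=1)\,\mu_i^V$, where $\mu_i^U, \mu_i^V$ are the laws of $U,V$ restricted to the $i$-th cell.

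First I would fix, for each $i$, a coupling realizing $d_p(A_iU, A_iV)$; the inverse-c.d.f. transform described above attains the infimum. I would then assemble these into one coupling of $U$ and $V$ by drawing the cell index according to the partition probabilities and, inside cell $i$, using the chosen optimal coupling. By construction this is a valid coupling of $U$ and $V$, so $d_p(U,V)$ is bounded by its $L_p$ cost, which decomposes cellwise because the supports are disjoint. The final step is to pass from the resulting cellwise combination of the $d_p(A_iU, A_iV)$ — which appears naturally as a power-mean / $\ell^p$ aggregate — to the plain sum in the statement, using the elementary subadditivity $(\sum_i c_i^p)^{1/p} \le \sum_i c_i$ valid for $c_i \ge 0$ and $p \ge 1$ (the $p=\infty$ case being immediate from the max).

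A parallel route, should the gluing prove awkward, is purely metric: interpolate between $U$ and $V$ by the hybrids $W_k$ that follow $V$ on cells $1,\dots,k$ and $U$ on the remaining cells, so that $W_0 = U$ and $W_n = V$; the triangle inequality for $d_p$ then yields $d_p(U,V) \le \sum_k d_p(W_{k-1}, W_k)$, and consecutive hybrids differ only on cell $k$.

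I expect the main obstacle to be the interaction between the cell-$i$ pieces and the mass that $A_iU$ and $A_iV$ place at $0$ \emph{off} the $i$-th cell: the distributions compared on the right are the zero-padded $A_iU, A_iV$, not the bare conditional laws, so I must ensure the glued coupling pairs all of this off-cell mass consistently and at no extra cost, rather than allowing an optimal cellwise coupling to mix the padding zeros with genuine values. Equivalently, in the metric route, the common ``rest'' that $W_{k-1}$ and $W_k$ share has support disjoint from cell $k$ and is therefore \emph{not} independent of the cell-$k$ pieces, so property (P2) cannot be invoked as a black box; the crux is to verify by hand that keeping this common part matched to itself contributes nothing and leaves exactly $d_p(A_kU, A_kV)$. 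Getting this bookkeeping right, uniformly in $p$, is where the real work lies.
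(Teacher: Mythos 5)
You have in fact reconstructed the paper's own proof strategy: glue the per-cell optimal couplings into one coupling of $U$ and $V$ by drawing the cell with the partition probabilities, then finish with $(\sum_i c_i^p)^{1/p} \le \sum_i c_i$. But the obstacle you flag at the end and defer as ``bookkeeping'' is not bookkeeping — it is a genuine, and in fact unfixable, gap. Writing $q_i := \Pr\{A_i = 1\}$ and $\mu_i, \nu_i$ for the laws of $U, V$ conditional on $A_i = 1$, your glued coupling has $p$-th power cost exactly $\sum_i q_i\, d_p^p(\mu_i, \nu_i)$, so what the construction proves is $d_p^p(U,V) \le \sum_i q_i\, d_p^p(\mu_i,\nu_i)$. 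To reach the stated right-hand side you would need $q_i\, d_p^p(\mu_i,\nu_i) \le d_p^p(A_i U, A_i V)$, i.e.\ that forcing the off-cell padding mass at $0$ to be matched to itself never increases the optimal cost. That reduces to the uncrossing inequality $|a-b|^p \le |a|^p + |b|^p$ (rematching pairs $(0,b),(a,0)$ into $(0,0),(a,b)$), which holds for $p=1$, and for $a,b \ge 0$, but fails for $p>1$ when $a$ and $b$ have opposite signs: an optimal coupling of the padded laws can exploit the zeros to absorb mass differences, making $d_p(A_iU, A_iV)$ strictly cheaper than any padding-respecting coupling.

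No repair is possible, because the lemma as stated is false for every $p \in (1,\infty]$. Take $\Omega = \{\omega_1,\omega_2\}$, each of probability $\tfrac12$, $A_i = \indic{\omega = \omega_i}$, $U \equiv 1$, $V(\omega_1) = -1$, $V(\omega_2) = 1$. Then $A_2 U$ and $A_2 V$ have the same law, so $d_p(A_2U, A_2V) = 0$, and $A_1U \sim \tfrac12\delta_0 + \tfrac12\delta_1$, $A_1V \sim \tfrac12\delta_{-1} + \tfrac12\delta_0$ have $d_p^p(A_1U, A_1V) = 1$ (the optimal coupling pairs each padding zero with the other variable's genuine value — precisely the mixing you worried about). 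Yet $U$ is deterministic, so the only coupling gives $d_p^p(U, V) = \tfrac12\,2^p = 2^{p-1}$, whence $d_p(U,V) = 2^{1-1/p} > 1 = \sum_i d_p(A_iU, A_iV)$ for all $p > 1$ (and $2 > 1$ for $p=\infty$). Your fallback hybrid route collapses at the same spot: here $W_1 = V$ already, and the needed bound $d_p(W_0, W_1) \le d_p(A_1U, A_1V)$ is exactly what fails; as you anticipated, P2 cannot be invoked and the by-hand verification does not go through. For what it is worth, the paper's own proof founders on the identical point: it asserts that one ``can choose $Y_i, Z_i$ so that also $|Y_i - Z_i| = 0$ whenever $A_i = 0$, without increasing the expected norm,'' which is just the $p=1$ uncrossing fact, and its subsequent chain silently treats the restricted (padding-respecting) infimum as equal to $d_p^p(A_iU, A_iV)$. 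So your proposal is a faithful reconstruction of the paper's argument, flaw included. The $p=1$ case is sound and admits a cleaner proof than either route: for all $y$, $F_{A_iU}(y) - F_{A_iV}(y) = \Pr\{A_i = 1, U \le y\} - \Pr\{A_i = 1, V \le y\}$, summing over $i$ gives $F_U - F_V$, and integrating $|F_U - F_V| \le \sum_i |F_{A_iU} - F_{A_iV}|$ yields $d_1(U,V) \le \sum_i d_1(A_iU, A_iV)$; alternatively, your gluing argument works verbatim for all $p$ if $U, V \ge 0$ almost surely, or with the conditional-law right-hand side $\sum_i q_i\, d_p^p(\mu_i, \nu_i)$.
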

Let $\cZ$ denote the space of value distributions with bounded moments.
For two value distributions $Z_1, Z_2 \in \cZ$ we will make use of a maximal form of the Wasserstein metric:
\begin{equation*}
\dip(Z_1, Z_2) := \sup_{x, a} d_p(Z_1(x,a), Z_2(x,a)) .
\end{equation*}
We will use $\dip$ to establish the convergence of the distributional Bellman operators.
\begin{lem}
$\dip$ is a metric over value distributions.
\end{lem}

\subsection{Policy Evaluation}\label{sec:policy_evaluation}

In the \emph{policy evaluation} setting \citep{sutton98reinforcement} we are interested in the value function $V^\pi$ associated with a given policy $\pi$. The analogue here is the value distribution $Z^\pi$. In this section we characterize $Z^\pi$ and study the behaviour of the policy evaluation operator $\cTpi$. We emphasize that $Z^\pi$ describes the intrinsic randomness of the agent's interactions with its environment, rather than some measure of uncertainty about the environment itself.

We view the reward function as a random vector $R \in \cZ$, and define the transition operator $P^\pi : \cZ \to \cZ$
\begin{align}
P^\pi Z(x, a) &\Ddef Z(X', A') \label{eqn:policy_operator} \\
X' &\sim P(\cdot \cbar x, a), \, A' \sim \pi(\cdot \cbar X'), \nonumber
\end{align}
where we use capital letters to emphasize the random nature of the next state-action pair $(X', A')$.
We define the distributional Bellman operator $\cTpi : \cZ \to \cZ$ as
\begin{equation}
\cTpi Z(x,a) \Ddef R(x,a) + \gamma P^\pi Z(x,a). \label{eqn:distributional_bellman_operator_pe}
\end{equation}
While $\cTpi$ bears a surface resemblance to the usual Bellman operator \eqnref{bellman_operator_pe}, it  is fundamentally different. In particular, three sources of randomness define the compound distribution $\cTpi Z$:
\begin{enumerate}[noitemsep,label=\alph*)]
	\item The randomness in the reward $R$,
	\item The randomness in the transition $P^\pi$, and
	\item The next-state value distribution $Z(X', A')$.
\end{enumerate}
In particular, we make the usual assumption that these three quantities are independent.
In this section we will show that \eqnref{distributional_bellman_operator_pe} is a contraction mapping whose  unique fixed point is the random return $Z^\pi$.

\subsubsection{Contraction in $\dip$}
Consider the process $Z_{k+1} := \cTpi Z_k$, starting with some $Z_0 \in \cZ$. We may expect the limiting expectation of $\{ Z_k \}$ to converge exponentially quickly, as usual, to $Q^\pi$. As we now show, the process converges in a stronger sense: $\cTpi$ is a contraction in $\dip$, which implies that all moments also converge exponentially quickly.

\begin{lem}\label{lem:contraction_pe}
$\cTpi : \cZ \to \cZ$ is a $\gamma$-contraction in $\dip$.
\end{lem}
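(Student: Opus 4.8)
The plan is to reduce the claim to a pointwise statement about a single state-action pair and then bound the resulting Wasserstein distance using the basic properties (P1)--(P3) together with the structure of $P^\pi$ as a mixture over successor states. Since $\dip(Z_1,Z_2) = \sup_{x,a} d_p(Z_1(x,a),Z_2(x,a))$, it suffices to show that $d_p\big(\cTpi Z_1(x,a), \cTpi Z_2(x,a)\big) \le \gamma\, \dip(Z_1,Z_2)$ for every fixed $(x,a)$, after which taking the supremum over $(x,a)$ finishes the proof.

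First I would peel off the reward and the discount. Writing $\cTpi Z_i(x,a) \Deq R(x,a) + \gamma P^\pi Z_i(x,a)$ and using the independence of $R(x,a)$ from the two next-state returns, property (P2) lets me discard the common additive term $R(x,a)$, and property (P1) extracts the factor $\gamma$:
$$d_p\big(\cTpi Z_1(x,a), \cTpi Z_2(x,a)\big) \le d_p\big(\gamma P^\pi Z_1(x,a), \gamma P^\pi Z_2(x,a)\big) \le \gamma\, d_p\big(P^\pi Z_1(x,a), P^\pi Z_2(x,a)\big).$$
This isolates the only nontrivial quantity, the distance between the two transition-propagated distributions.

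The main obstacle is the last step: bounding $d_p\big(P^\pi Z_1(x,a), P^\pi Z_2(x,a)\big)$ by $\dip(Z_1,Z_2)$, because $P^\pi Z_i(x,a)$ is the law of $Z_i(X',A')$ where $(X',A')$ is drawn from $P(\cdot \cbar x,a)$ and $\pi$, i.e. a mixture over successor pairs. I would build an explicit coupling witnessing the infimum in the definition of $d_p$: draw the successor $(X',A')$ once and share it between the two processes, and conditioned on $(X',A') = (x',a')$ use an optimal $d_p$-coupling $(U_{x'a'}, V_{x'a'})$ of $Z_1(x',a')$ and $Z_2(x',a')$, so that $\| U_{x'a'} - V_{x'a'} \|_p = d_p(Z_1(x',a'), Z_2(x',a'))$. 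The resulting pair $(U,V)$ has the correct marginals $P^\pi Z_1(x,a)$ and $P^\pi Z_2(x,a)$, and averaging over the successor distribution gives $\| U - V \|_p^p = \expect_{(X',A')}\big[ d_p(Z_1(X',A'), Z_2(X',A'))^p \big] \le \dip(Z_1,Z_2)^p$. Since $d_p$ is an infimum over all couplings, this particular coupling yields $d_p\big(P^\pi Z_1(x,a), P^\pi Z_2(x,a)\big) \le \dip(Z_1,Z_2)$; the $p=\infty$ case is identical with the essential supremum in place of the $L_p$ average.

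Combining the three displays gives $d_p\big(\cTpi Z_1(x,a), \cTpi Z_2(x,a)\big) \le \gamma\, \dip(Z_1,Z_2)$ uniformly in $(x,a)$, and taking the supremum over $(x,a)$ yields $\dip(\cTpi Z_1, \cTpi Z_2) \le \gamma\, \dip(Z_1,Z_2)$, the desired $\gamma$-contraction. I expect the coupling/averaging argument for $P^\pi$ to be the delicate part: the Partition lemma delivers this mixture bound cleanly for $p=1$ but overcounts for $p>1$ (the successor weights enter as $p_i^{1/p}$ rather than $p_i$), so the uniform bound for general $p$ really relies on the convexity of $t \mapsto t^p$ implicit in the $L_p$ averaging above rather than on a naive sum of per-component distances.
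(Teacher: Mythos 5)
Your proof is correct and takes essentially the same route as the paper's: use (P2) to discard the common reward term, (P1) to extract the factor $\gamma$, bound $d_p\big(P^\pi Z_1(x,a), P^\pi Z_2(x,a)\big)$ by $\sup_{x',a'} d_p(Z_1(x',a'), Z_2(x',a'))$, and take the supremum over $(x,a)$. The only difference is that the paper asserts the key mixture inequality directly ``from the definition of $P^\pi$,'' whereas you justify it explicitly with the shared-successor coupling and $L_p$ averaging --- a correct filling-in of the step the paper leaves implicit (and your closing remark is apt: the Partition lemma would indeed be too lossy for $p>1$, which is why this step rests on the coupling argument rather than on Lemma~\ref{lem:partition_lemma}).
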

Using Lemma \ref{lem:contraction_pe}, we conclude using Banach's fixed point theorem that $\cTpi$ has a unique fixed point. By inspection, this fixed point must be $Z^\pi$ as defined in \eqnref{random_return}. As we assume all moments are bounded, this is sufficient to conclude that the sequence $\{ Z_k \}$ converges to $Z^\pi$ in $\dip$ for $1 \le p \le \infty$.

To conclude, we remark that not all
distributional metrics are equal; for example, \citet{chung87discounted} have shown that $\cTpi$ is not a contraction in total variation distance. Similar results can be derived for the Kullback-Leibler divergence and the Kolmogorov distance.

\subsubsection{Contraction in Centered Moments}
Observe that $d_2(U, V)$ (and more generally, $d_p$) relates to a coupling $C(\omega) := U(\omega) - V(\omega)$, in the sense that
\begin{align*}
d_2^2(U, V) &\le \expect [ (U - V)^2 ] = \Var \big (C \big ) + \big ( \expect C \big )^2 .
\end{align*}
As a result, we cannot directly use $d_2$ to bound the variance difference $| \Var(\cTpi Z(x,a)) - \Var(Z^\pi(x,a)) |$. However, $\cTpi$ is in fact a contraction in variance \citep[][see also appendix]{sobel82variance}.
In general, $\cTpi$ is not a contraction in the $p^{th}$ centered moment, $p > 2$, but the centered moments of the iterates $\{ Z_k \}$ still converge exponentially quickly to those of $Z^\pi$; the proof extends the result of \citet{rosler92fixed}.

\subsection{Control}\label{sec:control}

Thus far we have considered a fixed policy $\pi$, and studied the behaviour of its associated operator $\cTpi$. We now set out to understand the distributional operators of the \emph{control} setting -- where we seek a policy $\pi$ that maximizes value -- and the corresponding notion of an optimal value distribution. As with the optimal value function, this notion is intimately tied to that of an optimal policy. However, while all optimal policies attain the same value $Q^*$, in our case a difficulty arises: in general there are many optimal value distributions.

In this section we show that the distributional analogue of the Bellman optimality operator converges, in a weak sense, to the set of optimal value distributions. However, this operator is not a contraction in any metric between distributions, and is in general much more temperamental than the policy evaluation operators. We believe the convergence issues we outline here are a symptom of the inherent instability of greedy updates, as highlighted by e.g.~\citet{tsitsiklis02convergence} and most recently \citet{harutyunyan16qlambda}.

Let $\Pi^*$ be the set of optimal policies. We begin by characterizing what we mean by an \emph{optimal value distribution}.
\begin{defn}[Optimal value distribution]
An optimal value distribution is the v.d. of an optimal policy. The set of optimal value distributions is $\cZ^* := \{ Z^{\pi^*} : \pi^* \in \Pi^* \}$.
\end{defn}
We emphasize that not all value distributions with expectation $Q^*$ are optimal: they must match the full distribution of the return under some optimal policy.
\begin{defn}
A greedy policy $\pi$ for $Z \in \cZ$ maximizes the expectation of $Z$. The set of greedy policies for $Z$ is
\begin{equation*}
\cG_Z := \{ \pi : \sum\nolimits_a \pi(a \cbar x) \expect Z(x, a) = \max_{a' \in \cA} \expect Z(x, a') \}.
\end{equation*}
\end{defn}
Recall that the expected Bellman optimality operator $\cT$ is
\begin{equation}
\cT Q(x,a) = \expect R (x,a) + \gamma \expects_P \max_{a' \in \cA} Q(x', a') . \label{eqn:bellman_optimality_operator}
\end{equation}
The maximization at $x'$ corresponds to some greedy policy. Although this policy is implicit in \eqnref{bellman_optimality_operator}, we cannot ignore it in the distributional setting.
We will call a \emph{distributional Bellman optimality operator} any operator $\cT$ which implements a greedy selection rule, i.e.
\begin{equation*}
\cT Z = \cTpi Z\text{ for some } \pi \in \cG_Z .
\end{equation*}
As in the policy evaluation setting, we are interested in the behaviour of the iterates $Z_{k+1} := \cT Z_k$, $Z_0 \in \cZ$. Our first result is to assert that $\expect Z_k$ behaves as expected.
\begin{lem}\label{lem:exponential_convergence_of_mean}
Let $Z_1, Z_2 \in \cZ$. Then
\begin{equation*}
\infnorm{\expect \cT Z_1 - \expect \cT Z_2} \le \gamma \infnorm{\expect Z_1 - \expect Z_2},
\end{equation*}
and in particular $\expect Z_k \to Q^*$ exponentially quickly.
\end{lem}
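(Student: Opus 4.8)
The plan is to reduce the expectation of the distributional optimality operator to the classical scalar Bellman optimality operator $\cT$ of \eqnref{bellman_optimality_operator}, and then simply invoke the fact, recalled in the Setting section, that the latter is already a $\gamma$-contraction in $\infnorm{\cdot}$. Concretely, I would first fix $Z \in \cZ$ and let $\pi \in \cG_Z$ be the greedy policy realizing the given operator, so that $\cT Z = \cTpi Z$. Taking expectations in the defining distributional equation $\cTpi Z(x,a) \Ddef R(x,a) + \gamma P^\pi Z(x,a)$ and using linearity of expectation together with the assumed independence of the reward $R$, the transition $P^\pi$, and the next-state return, gives $\expect \cT Z(x,a) = \expect R(x,a) + \gamma \expect_{P,\pi} \expect Z(x',a')$. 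The crucial step is then to observe that, because $\pi$ is greedy for $Z$, the inner policy average collapses to a maximum, $\sum_{a'} \pi(a' \cbar x') \expect Z(x', a') = \max_{a'} \expect Z(x', a')$, so that $\expect \cT Z = \cT \expect Z$, where on the right $\cT$ now denotes the scalar optimality operator acting on $\expect Z \in \bR^{\cX \times \cA}$.

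The point I would emphasize is that, although the distributional operator depends on the particular greedy policy selected (and $Z_1$, $Z_2$ will in general induce different greedy policies), its action on the mean is policy-independent: it is always exactly $\cT$ applied to the mean. With this identity in hand the contraction is immediate. Applying it to both arguments and using the known $\gamma$-contraction of the scalar operator,
\begin{equation*}
\infnorm{\expect \cT Z_1 - \expect \cT Z_2} = \infnorm{\cT \expect Z_1 - \cT \expect Z_2} \le \gamma \infnorm{\expect Z_1 - \expect Z_2}.
\end{equation*}
For the convergence claim I would apply the same identity inductively to obtain $\expect Z_k = \cT^k \expect Z_0$, so that $\{\expect Z_k\}$ is precisely the classical value-iteration sequence, which converges to the unique fixed point $Q^*$ at geometric rate $\gamma$ by the standard Banach argument already cited for the scalar case.

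There is no deep obstacle here; the only place that genuinely requires care is the reduction $\expect \cT Z = \cT \expect Z$. One must correctly push the expectation through the compound distribution, justifying the interchange via the independence assumption and the boundedness of moments guaranteed by $Z \in \cZ$, and then invoke the definition of $\cG_Z$ to recover the maximum. Everything downstream is bookkeeping plus a citation to the classical sup-norm contraction result, and the result is in fact noteworthy precisely because it holds \emph{despite} the distributional operator being, as the later development shows, non-contractive at the level of full distributions.
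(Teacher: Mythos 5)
Your proposal is correct and follows essentially the same route as the paper's proof, which simply writes $\expect \cT_D Z = \cT_E \expect Z$ ``by linearity of expectation'' and invokes the classical $\gamma$-contraction of the scalar optimality operator in $\infnorm{\cdot}$. The only difference is that you spell out the step the paper leaves implicit --- that greediness of $\pi \in \cG_Z$ collapses $\sum_{a'} \pi(a' \cbar x') \expect Z(x',a')$ to $\max_{a'} \expect Z(x',a')$, making the identity policy-independent --- which is a worthwhile clarification but not a different argument.
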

By inspecting Lemma \ref{lem:exponential_convergence_of_mean}, we might expect that $Z_k$ converges quickly in $\dip$ to some fixed point in $\cZ^*$. Unfortunately, convergence is neither quick nor assured to reach a fixed point. In fact, the best we can hope for is pointwise convergence, not even to the set $\cZ^*$ but to the larger set of \emph{nonstationary optimal value distributions}.
\begin{defn}
A nonstationary optimal value distribution $Z^{**}$ is the value distribution corresponding to a sequence of optimal policies. The set of n.o.v.d. is $\cZ^{**}$.
\end{defn}
\begin{thm}[Convergence in the control setting]\label{thm:control_convergence}
Let $\cX$ be measurable and suppose that $\cA$ is finite. Then
\begin{equation*}
\lim_{k \to \infty} \inf_{Z^{**} \in \cZ^{**}} d_p (Z_k(x,a), Z^{**}(x,a)) = 0 \quad \forall x, a .
\end{equation*}
If $\cX$ is finite, then $Z_k$ converges to $\cZ^{**}$ uniformly. Furthermore, if there is a total ordering $\prec$ on $\Pi^*$, such that for any $Z^* \in \cZ^*$,
\begin{equation*}
\cT Z^* = \cT^\pi Z^* \; \text{with} \; \pi \in \cG_{Z^*}, \; \pi \prec \pi' \; \; \forall \pi' \in \cG_{Z^*} \setminus \{\pi \} .
\end{equation*}
Then $\cT$ has a unique fixed point $Z^* \in \cZ^*$.
\end{thm}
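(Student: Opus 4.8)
The plan is to leverage the two facts already in hand: that $\cTpi$ is a $\gamma$-contraction in $\dip$ (Lemma~\ref{lem:contraction_pe}) and that the means converge, $\expect Z_k \to Q^*$ exponentially (Lemma~\ref{lem:exponential_convergence_of_mean}). Writing $\pi_k \in \cG_{Z_k}$ for the greedy policy realized at step $k$, so that $Z_{k+1} = \cT^{\pi_k} Z_k$, the key first step is to turn mean convergence into eventual \emph{optimality} of the $\pi_k$. Since $\cA$ is finite, at any state $x$ with a strict gap between the best and second-best action under $Q^*$, once $\expect Z_k(x,\cdot)$ is within half that gap of $Q^*(x,\cdot)$ every greedy action at $x$ is also $Q^*$-greedy, i.e.\ optimal; and where several actions tie under $Q^*$, $\pi_k$ can only select among those (optimal) actions. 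Thus $\pi_k$ is optimal at $x$ for all $k \ge K(x)$, where $K(x)$ depends on the gap at $x$.

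When $\cX$ is finite there are finitely many gaps, so $K := \max_x K(x) < \infty$ and every $\pi_k$ with $k \ge K$ lies in $\Pi^*$ globally. Fixing any $\pi_K \in \Pi^*$ with stationary value distribution $Z^{\pi_K} \in \cZ^*$, the sequence $(\pi_{k-1},\dots,\pi_K,\pi_K,\pi_K,\dots)$ consists of optimal policies, so its value distribution $W_k := \cT^{\pi_{k-1}}\cdots\cT^{\pi_K} Z^{\pi_K}$ lies in $\cZ^{**}$. Since $Z_k = \cT^{\pi_{k-1}}\cdots\cT^{\pi_K} Z_K$ shares the leading operators, applying the $\gamma$-contraction $k-K$ times gives
\begin{equation*}
\inf_{Z^{**}\in\cZ^{**}} \dip(Z_k, Z^{**}) \le \dip(Z_k, W_k) \le \gamma^{k-K}\,\dip(Z_K, Z^{\pi_K}),
\end{equation*}
which is finite by the moment assumption and tends to $0$ uniformly in $(x,a)$, yielding the uniform-convergence claim.

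For measurable $\cX$ the gaps need not be bounded away from $0$, so no finite global $K$ exists. Fixing $(x,a)$ and $\epsilon>0$, I would truncate at a horizon $H$ with $\gamma^H C \le \epsilon$, where $C$ bounds the relevant $d_p$-distances (finite by the moment assumption), so that only the first $H$ transitions matter up to $\epsilon$. I then replace each greedy $\pi_j$ by an optimal policy $\mu_j$ agreeing with $\pi_j$ wherever $\pi_j$ is already optimal; the sequence $(\mu_{k-1},\mu_{k-2},\dots)$ is optimal, so its value distribution $W_k$ lies in $\cZ^{**}$. Using the partition lemma (Lemma~\ref{lem:partition_lemma}) and properties (P1)--(P3), I would bound $d_p(Z_k(x,a),W_k(x,a))$ by the discounted probability that the trajectory from $(x,a)$ reaches, within $H$ steps, a state at which the chosen greedy policy is still suboptimal. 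The main obstacle is precisely this term: because $\pi_j$ becomes optimal only pointwise in the state, with a state-dependent threshold, I must show its contribution vanishes as $k\to\infty$. I expect to do so by a dominated-convergence argument over the trajectory measure---the set of ``bad'' states has probability tending to $0$ while its contribution stays bounded by $C$.

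Finally, for uniqueness under the tie-breaking order $\prec$: for any $Z^*\in\cZ^*$ one has $\expect Z^* = Q^*$, hence $\cG_{Z^*} = \Pi^*$, and the hypothesis forces $\cT Z^* = \cT^{\pi_0} Z^*$ with $\pi_0 = \min_\prec \Pi^*$ the \emph{same} policy for every such $Z^*$. Thus $\cT$ agrees with the single operator $\cT^{\pi_0}$ on $\cZ^*$. Since $\cT^{\pi_0}$ is a $\gamma$-contraction (Lemma~\ref{lem:contraction_pe}), Banach's theorem gives it a unique fixed point, namely the stationary value distribution $Z^{\pi_0}\in\cZ^*$, which is then a fixed point of $\cT$; conversely any fixed point of $\cT$ has mean $Q^*$ and hence lies in $\cZ^*$, so it is a fixed point of $\cT^{\pi_0}$ and must equal $Z^{\pi_0}$. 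This settles existence and uniqueness, with the contraction argument making this the most routine of the three parts.
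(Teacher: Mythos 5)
Your route coincides with the paper's own proof in all essentials: you use Lemma~\ref{lem:exponential_convergence_of_mean} together with finiteness of $\cA$ and the per-state action gap to show the realized greedy policies $\pi_k$ are eventually optimal at each state (this is exactly the paper's construction of the sets $\cX_k$ and its Lemma~\ref{lem:x_k_grows_monotonically}), and you then compare $Z_k$ against the value distribution of a nonstationary optimal policy obtained by splicing the realized greedy sequence onto an optimal tail, just as the paper compares against $\cT^{\bar\pi_k} Z^*$. Your finite-$\cX$ bound $\dip(Z_k, W_k) \le \gamma^{k-K}\,\dip(Z_K, Z^{\pi_K})$, obtained by composing same-policy contractions, is a clean streamlining of what the paper gets by letting $\delta = 0$ once $\cX_k = \cX$, and it is correct. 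For measurable $\cX$, however, the ``dominated-convergence over the trajectory measure'' step you defer is precisely the content of the paper's recursively defined sets $\cX_{k,i}$ and Lemma~\ref{lem:x_k_i_grows_monotonically} ($\cX_{k,i} \uparrow \cX$ for each fixed $i$): the bookkeeping is delicate because both the bad sets and the greedy policies vary with the time index, which is why the paper's recursion shifts indices ($x \in \cX_{k,i}$ requires $P(\cX_{k-1,i-1} \cbar x, \pi^*(x)) \ge 1-\delta$). So for the part of the statement that actually concerns measurable $\cX$, your proposal is a correct plan but remains at sketch level where the paper supplies the key induction.

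One genuine error: in the uniqueness argument you assert that any fixed point of $\cT$ ``has mean $Q^*$ and hence lies in $\cZ^*$.'' That inference is false --- the paper explicitly warns that not all value distributions with expectation $Q^*$ are optimal --- and it matters here because the ordering hypothesis constrains $\cT$ only on $\cZ^*$, so you cannot invoke it for an arbitrary fixed point before placing that fixed point in $\cZ^*$. The repair stays inside your framework: if $Z = \cT Z$ then $\expect Z = Q^*$, hence $\cG_Z = \Pi^*$ and $Z = \cT^{\pi} Z$ for whichever $\pi \in \cG_Z \subseteq \Pi^*$ the operator selects; since $\cT^{\pi}$ is a $\gamma$-contraction with unique fixed point $Z^{\pi}$ (Lemma~\ref{lem:contraction_pe}), we get $Z = Z^{\pi} \in \cZ^*$, and only now does the ordering hypothesis yield $Z = \cT^{\pi_0} Z$ and thus $Z = Z^{\pi_0}$. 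With that patch, your uniqueness argument is actually more explicit than the paper's one-line version, and the rest of the proposal stands.
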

Comparing Theorem \ref{thm:control_convergence} to Lemma \ref{lem:exponential_convergence_of_mean} reveals a significant difference between the distributional framework and the usual setting of expected return. While the mean of $Z_k$ converges exponentially quickly to $Q^*$, its distribution need not be as well-behaved! To emphasize this difference, we now provide a number of negative results concerning $\cT$.
\begin{prop}\label{prop:control_noncontraction}
The operator $\cT$ is not a contraction.
\end{prop}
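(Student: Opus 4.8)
The plan is to prove this negative result by exhibiting an explicit counterexample: a small MDP together with two value distributions $Z_1, Z_2 \in \cZ$ that are arbitrarily close in $\dip$ yet whose images under $\cT$ remain a fixed positive distance apart. The mechanism I would exploit is the discontinuity of the greedy selection rule $\cG_Z$. Because the greedy action at a state depends only on the \emph{expectation} $\expect Z(x,\cdot)$, an infinitesimal perturbation that flips which action is greedy can swap in an entirely different return distribution, and $\cT$ has no way to smooth this jump.

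Concretely, I would take a two-state MDP $\cX = \{x_0, x_1\}$ with deterministic dynamics: from $x_0$ a single action $a_0$ yields reward $0$ and transitions to $x_1$; at $x_1$ the two actions $a_1, a_2$ lead to termination, so the components $\cT Z(x_1, \cdot)$ equal fixed rewards independent of the input and contribute nothing. The only active component is $\cT Z(x_0, a_0) \Deq \gamma Z(x_1, A')$, with $A'$ chosen greedily for $Z$. I would then pick two return laws with nearly equal means but very different shapes, for instance $A$, the two-point law on $\{-M, +M\}$, and $B = \delta_0$, for which $d_p(A, B) = M$. Setting $Z_1(x_1, a_1)$ to be $A$ shifted up by $\epsilon$ and $Z_2(x_1, a_1)$ to be $A$ shifted down by $\epsilon$, with $Z_1(x_1, a_2) = Z_2(x_1, a_2) = B$, $Z_1(x_0,a_0)=Z_2(x_0,a_0)=\delta_0$, and the two agreeing everywhere else, makes the greedy action $a_1$ for $Z_1$ (mean $+\epsilon$) but $a_2$ for $Z_2$ (mean $0 > -\epsilon$), with no ties so the choice is forced regardless of the selection rule defining $\cT$.

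The two distance computations are then routine. On the input side $Z_1$ and $Z_2$ differ only at $(x_1, a_1)$, by a deterministic shift of $2\epsilon$, so $d_p$ there equals $2\epsilon$ and hence $\dip(Z_1, Z_2) = 2\epsilon$. On the output side $\cT Z_1(x_0,a_0) \Deq \gamma(A+\epsilon)$ while $\cT Z_2(x_0, a_0) \Deq \gamma B$, giving $d_p = \gamma\, d_p(A+\epsilon, B) \ge \gamma(M - \epsilon)$, while the terminal components at $x_1$ contribute $0$. Hence the ratio $\dip(\cT Z_1, \cT Z_2)/\dip(Z_1, Z_2) \ge \gamma(M-\epsilon)/(2\epsilon)$, which diverges as $\epsilon \to 0$. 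Since for a single such pair $\dip(\cT Z_1, \cT Z_2) > \gamma' \dip(Z_1, Z_2)$ already holds for \emph{every} $\gamma' < 1$, the operator $\cT$ is not a contraction.

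The main difficulty is not computational but the joint engineering of three competing requirements: (i) the means must straddle the tie point so the argmax genuinely flips between $Z_1$ and $Z_2$; (ii) every state-action component of $Z_1$ and $Z_2$ must stay $O(\epsilon)$-close in $d_p$; and (iii) the two competing next-state laws must be far apart in $d_p$ despite having equal means. Balancing (i)--(iii) is precisely what makes the shifted two-point versus point-mass pair the natural choice. I would also be careful to keep the construction robust to the particular greedy rule used to define $\cT$ — which is why I avoid ties above — and to observe that the argument does not depend on $\dip$ specifically: the same flip separates the outputs in essentially any metric sensitive to distributional shape, consistent with the stronger claim that $\cT$ contracts in no metric.
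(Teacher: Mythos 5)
Your proposal is correct and follows essentially the same route as the paper: a two-state MDP in which an $O(\epsilon)$ perturbation of the mean flips the greedy action at the second state, swapping a spread-out two-point distribution for a point mass, so the output distance is $\Theta(\gamma M)$ while the input distance is $2\epsilon$. The only cosmetic differences are that you compare two symmetrically perturbed distributions $Z_1, Z_2$ rather than $Z$ against the fixed point $Z^*$ (the paper's choice, which also serves its subsequent fixed-point discussion), and your free parameter $M$ makes explicit that the expansion factor is unbounded rather than merely greater than one.
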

Consider the following example (Figure \ref{fig:control_noncontraction}, left). There are two states, $x_1$ and $x_2$; a unique transition from $x_1$ to $x_2$; from $x_2$, action $a_1$ yields no reward, while the optimal action $a_2$ yields $1 + \epsilon$ or $-1 + \epsilon$ with equal probability. Both actions are terminal. There is a unique optimal policy and therefore a unique fixed point $Z^*$. Now consider $Z$ as given in Figure \ref{fig:control_noncontraction} (right), and its distance to $Z^*$:
\begin{equation*}
\bar d_1(Z, Z^*) = d_1(Z(x_2, a_2), Z^*(x_2, a_2)) = 2 \epsilon , \\
\end{equation*}
where we made use of the fact that $Z = Z^*$ everywhere except at $(x_2, a_2)$. When we apply $\cT$ to $Z$, however, the greedy action $a_1$ is selected and $\cT Z(x_1) = Z(x_2, a_1)$. But
\begin{align*}
d_1(\cT Z, \cT Z^*) &= d_1(\cT Z(x_1), Z^*(x_1)) \\
&= \tfrac{1}{2} | 1 - \epsilon | + \tfrac{1}{2}  | 1 + \epsilon | > 2 \epsilon
\end{align*}
for a sufficiently small $\epsilon$. This shows that the undiscounted update is not a nonexpansion: $\bar d_1(\cT Z, \cT Z^*) > \bar d_1(Z, Z^*)$. With $\gamma < 1$, the same proof shows it is not a contraction. Using a more technically involved argument, we can extend this result to any metric which separates $Z$ and $\cT Z$.

\begin{figure}
\begin{center}
\begin{minipage}{0.1\textwidth}
\hspace{-3em} \includegraphics[width=0.85in]{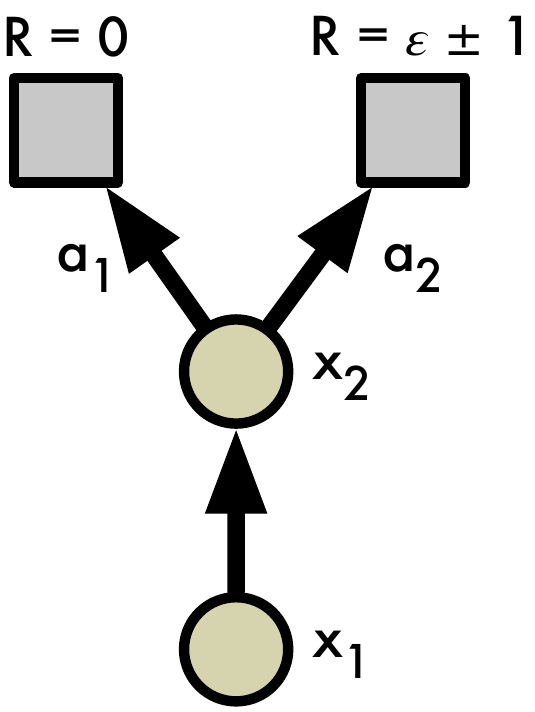}
\end{minipage}
\begin{minipage}{0.2\textwidth}
$\begin{array}[c]{lccc}
\multicolumn{1}{c}{} & \multicolumn{1}{c}{x_1} & \multicolumn{1}{c}{x_2, a_1} & \multicolumn{1}{c}{x_2, a_2} \\
\midrule
Z^* & \cellcolor{lightsunflower}\epsilon \pm 1 & 0 & \cellcolor{lightsunflower}\epsilon \pm 1 \\
\midrule
\midrule
Z & \epsilon \pm 1 & 0 & \cellcolor{lightsunflower} -\epsilon \pm 1 \\
\midrule
\cT Z & \cellcolor{lightsunflower}{0} & 0 & \epsilon \pm 1 \\
\bottomrule
\end{array}$
\end{minipage}
\end{center}
\caption{Undiscounted two-state MDP for which the optimality operator $\cT$ is not a contraction, with example. The entries that contribute to $\bar d_1(Z, Z^*)$ and $\bar d_1(\cT Z, Z^*)$ are highlighted.\label{fig:control_noncontraction}}
\end{figure}
\begin{prop}
Not all optimality operators have a fixed point $Z^* = \cT Z^*$.
\end{prop}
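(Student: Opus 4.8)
The plan is to exhibit a counterexample: a single MDP together with one specific (but legal) greedy selection rule whose induced operator $\cT$ has no fixed point at all. The driving observation is that the greedy set $\cG_Z$ depends on $Z$ only through its expectation $\expect Z$, so if two actions are permanently tied in expected value yet induce distinct return distributions, then both are greedy for \emph{every} $Z$ that respects the tie. Since the definition only requires $\cT Z = \cTpi Z$ for \emph{some} $\pi \in \cG_Z$, the selection rule is allowed to depend on all of $Z$, and I will use this freedom to make the rule flip between the two tied policies in a way that can never settle.

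Concretely, I would take an initial state $x_0$, a decision state $x$, and an absorbing terminal state. From $x_0$ a single action $a_0$ leads deterministically to $x$ with zero reward; at $x$ there are two terminal actions $a_1,a_2$ with rewards $R_1,R_2$ chosen so that $\expect R_1 = \expect R_2 = \mu$ while $R_1$ and $R_2$ have distinct laws (e.g.\ a point mass versus a symmetric two-point distribution with the same mean). Because both actions are terminal, any optimality operator forces $Z(x,a_i) \Deq R_i$ after a single application, so the expectation tie at $x$ is permanent and $\cG_Z$ always contains both $\pi_1$ (take $a_1$) and $\pi_2$ (take $a_2$). The only nontrivial entry is then $Z(x_0)$, for which $\cT Z(x_0) \Deq \gamma Z(x, A')$ with $A'$ the greedily selected action at $x$.

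I would then define the selection rule by the \emph{upstream} distribution: if $Z(x_0)\Deq \gamma R_1$ choose $\pi_2$ at $x$, otherwise choose $\pi_1$. To finish, suppose for contradiction that $Z = \cT Z$. Then $Z(x,a_i)\Deq R_i$, the tie holds, and exactly one branch of the rule applies. In the first branch the rule picks $a_2$, giving $\cT Z(x_0) \Deq \gamma R_2$, whose law differs from that of $\gamma R_1 \Deq Z(x_0)$; in the second branch it picks $a_1$, giving $\cT Z(x_0) \Deq \gamma R_1$, contradicting the case assumption that $Z(x_0)$ does not have the law of $\gamma R_1$. Either way $Z \ne \cT Z$, so no fixed point exists.

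The main obstacle, and the step deserving the most care, is verifying that this oscillating rule is genuinely an \emph{optimality operator} in the paper's sense, i.e.\ that it always selects a policy from $\cG_Z$. This hinges on the dichotomy just used: any true fixed point must reproduce $R_1,R_2$ exactly at $x$ and hence must respect the permanent expectation tie, and for every such $Z$ both $\pi_1,\pi_2$ are legitimately greedy, so the rule never violates greediness on the candidates that matter. For $Z$ that break the tie one simply defaults to the unique greedy action; such $Z$ are automatically non-fixed since $Z(x,a_i)$ no longer equals $R_i$ in law, so they require no special handling. Making this reduction explicit, rather than the arithmetic of $R_1,R_2$, is the crux of the argument.
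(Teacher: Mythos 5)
Your proposal is correct and takes essentially the same route as the paper: the paper also uses a two-state MDP with two terminal actions tied in expectation but with distinct return distributions (its noncontraction example with $\epsilon = 0$), together with a greedy tie-breaking rule that selects $a_2$ when the upstream distribution $Z(x_1)$ is the point mass at $0$ and $a_1$ otherwise, so that the iterates alternate and no fixed point can exist. Your writeup merely makes explicit two points the paper leaves implicit — that the flipping rule is a legitimate optimality operator for all $Z$ (defaulting when the tie is broken) and the case analysis showing $Z \ne \cT Z$ — which is a faithful elaboration rather than a different argument.
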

To see this, consider the same example, now with $\epsilon = 0$, and a greedy operator $\cT$ which breaks ties by picking $a_2$ if $Z(x_1) = 0$, and $a_1$ otherwise. Then the sequence $\cT Z^*(x_1), (\cT)^2 Z^*(x_1),\, \dots$ alternates between $Z^*(x_2, a_1)$ and $Z^*(x_2, a_2)$.
\begin{prop}
That $\cT$ has a fixed point $Z^* = \cT Z^*$ is insufficient to guarantee the convergence of $\{ Z_k \}$ to $\cZ^*$.
\end{prop}
Theorem \ref{thm:control_convergence} paints a rather bleak picture of the control setting. It remains to be seen whether the dynamical eccentricies highlighted here actually arise in practice. One open question is whether theoretically more stable behaviour can be derived using stochastic policies, for example from conservative policy iteration \citep{kakade02approximately}.

\section{Approximate Distributional Learning}\label{sec:algorithm}

In this section we propose an algorithm based on the distributional Bellman optimality operator. In particular, this will require choosing an approximating distribution. Although the Gaussian case has previously been considered \citep{morimura10parametric,tamar2016learning}, to the best of our knowledge we are the first to use a rich class of parametric distributions.

\subsection{Parametric Distribution}\label{sec:parametric_distribution}
We will model the value distribution using a discrete distribution parametrized by $N \in \bN$ and $\Vmin,\Vmax \in \bR$, and whose support is the set of atoms $\{z_i = \Vmin + i \triangle z : 0 \le i < N \}$, $\triangle z := \tfrac{\Vmax - \Vmin}{N-1}$. In a sense, these atoms are the ``canonical returns'' of our distribution. The atom probabilities are given by a parametric model $\theta: \mathcal{X} \times \mathcal{A} \to \mathbb{R}^N$
\begin{equation*}
	Z_\theta(x, a) = z_i \quad \text{w.p. } \; p_i(x, a) := \frac{e^{\theta_i(x, a)}}{\sum\nolimits_j e^{\theta_j(x, a)}}.
\end{equation*}
The discrete distribution has the advantages of being highly expressive and computationally friendly \citep[see e.g.][]{vandenoord16pixel}.

\subsection{Projected Bellman Update}
Using a discrete distribution poses a problem: the Bellman update $\cT Z_\theta$ and our parametrization $Z_\theta$ almost always have disjoint supports. From the analysis of Section \ref{sec:theory} it would seem natural to minimize the Wasserstein metric (viewed as a loss) between $\cT Z_\theta$ and $Z_\theta$, which is also conveniently robust to discrepancies in support.
However, a second issue prevents this: in practice we are typically restricted to learning from sample transitions, which is not possible under the Wasserstein loss (see Prop. \ref{prop:wasserstein_sgd_rl} and toy results in the appendix).

Instead, we project the sample Bellman update $\hat \cT Z_\theta$ onto the support of $Z_\theta$ (Figure \ref{fig:bellman_dist}, Algorithm \ref{alg:cql}), effectively reducing the Bellman update to multiclass classification. Let $\pi$ be the greedy policy w.r.t. $\expect Z_\theta$.
Given a sample transition $(x, a, r, x')$, we compute the Bellman update $\hat \cT z_j := r + \gamma z_j$ for each atom $z_j$, then distribute its probability $p_j(x', \pi(x'))$ to the immediate neighbours of $\hat \cT z_j$.
The $i^{th}$ component of the projected update $\Phi \hat \cT Z_\theta(x,a)$ is
\begin{small}
\begin{equation}
(\Phi \hat \cT Z_\theta(x,a))_i = \sum_{j=0}^{N-1} \left[ 1 - \frac{| [ \hat \cT z_j ]^{\Vmax}_{\Vmin} - z_i|}{\triangle z} \right]^1_0 p_j(x', \pi(x')),
\label{eqn:cat_proj}
\end{equation}
\end{small}
where $[ \cdot ]^b_a$ bounds its argument in the range $[a, b]$.\footnote{Algorithm \ref{alg:cql} computes this projection in time linear in $N$.} As is usual, we view the next-state distribution as parametrized by a fixed parameter $\tilde \theta$. The sample loss $\cL_{x,a}(\theta)$ is the cross-entropy term of the KL divergence
\begin{equation*}
D_{\textsc{kl}}(\Phi \hat \cT Z_{\tilde \theta}(x,a) \cdbar Z_\theta(x,a)),
\end{equation*}
which is readily minimized e.g. using gradient descent. We call this choice of distribution and loss the \emph{categorical algorithm}. When $N = 2$, a simple one-parameter alternative is
$\Phi \hat \cT Z_\theta(x,a) := [\expect [\hat \cT Z_\theta(x,a)] - \Vmin) / \triangle z]^1_0;$
we call this the \emph{Bernoulli algorithm}. 
We note that, while these algorithms appear unrelated to the Wasserstein metric, recent work \citep{bellemare17cramer} hints at a deeper connection.

\begin{algorithm}[ht]
\caption{Categorical Algorithm}\label{alg:cql}
\begin{algorithmic}
\INPUT A transition $x_t, a_t, r_t, x_{t+1}$, $\gamma_t \in [0, 1]$
\STATE $Q(x_{t+1}, a) := \sum\nolimits_i z_i p_i(x_{t+1}, a)$
\STATE $a^* \leftarrow \argmax_a Q(x_{t+1}, a)$
\STATE $m_i = 0, \quad i \in 0, \dots, N-1$
\FOR{$j \in 0, \dots, N-1$}
	\STATE \textcolor{gray}{\# Compute the projection of $\hat \cT z_j$ onto the support $\{ z_i \}$}
	\STATE $\hat \cT z_j \leftarrow \left [ r_t + \gamma_t z_j \right ]^{\Vmax}_{\Vmin}$
	\STATE $b_j \leftarrow (\hat \cT z_j - \Vmin) / \Delta z$ \; \textcolor{gray}{\# $b_j \in [0, N-1]$}
	\STATE $l \leftarrow \lfloor b_j \rfloor$, $u \leftarrow \lceil b_j \rceil$
	\STATE \textcolor{gray}{\# Distribute probability of $\hat \cT z_j$}
	\STATE $m_l \leftarrow m_l + p_j(x_{t+1}, a^*) (u - b_j)$
	\STATE $m_u \leftarrow m_u +  p_j(x_{t+1}, a^*) (b_j - l)$
\ENDFOR
\OUTPUT $-\sum_i m_i \log p_i(x_t, a_t)$ \; \textcolor{gray}{\# Cross-entropy loss}
\end{algorithmic}
\end{algorithm}

\begin{figure*}[htb]
\begin{center}
\includegraphics[width=.95\textwidth]{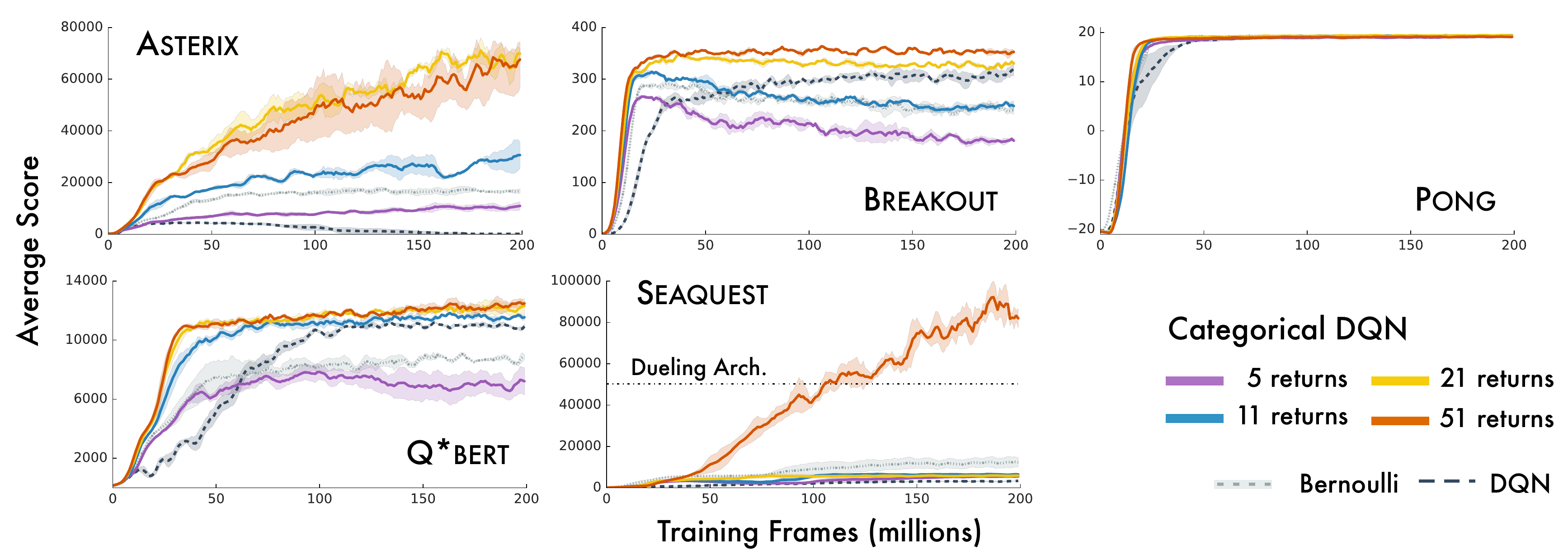}%
\caption{Categorical DQN: Varying number of atoms in the discrete distribution. Scores are moving averages over 5 million frames.}
\label{fig:atari_categorical}
\end{center}
\end{figure*}

\section{Evaluation on Atari 2600 Games}

To understand the approach in a complex setting, we applied the categorical algorithm to games from the Arcade Learning Environment \citep[ALE;][]{bellemare13arcade}. While the ALE is deterministic, stochasticity does occur in a number of guises: 1) from state aliasing, 2) learning from a nonstationary policy, and 3) from approximation errors.
We used five training games (Fig \ref{fig:atari_categorical}) and 52 testing games.

For our study, we use the DQN architecture \citep{mnih15nature}, but output the atom probabilities $p_i(x,a)$ instead of action-values, and chose $\Vmax = -\Vmin = 10$ from 
preliminary experiments over the training games. We call the resulting architecture \emph{Categorical DQN}.
We replace the squared loss $(r + \gamma Q(x',\pi(x')) - Q(x,a))^2$ by $\cL_{x,a}(\theta)$ and train the network to minimize this loss.\footnote{For $N=51$, our TensorFlow implementation trains at roughly 75\% of DQN's speed.} As in DQN, we use a simple $\epsilon$-greedy policy over the expected action-values; we leave as future work the many ways in which an agent could select actions on the basis of the full distribution. The rest of our training regime matches \citeauthor{mnih15nature}'s, including the use of a target network for $\tilde \theta$.

Figure \ref{fig:space_invaders_value_distribution} illustrates the typical value distributions we observed in our experiments. In this example, three actions (those including the button press) lead to the agent releasing its laser too early and eventually losing the game. The corresponding distributions reflect this: they assign a significant probability to 0 (the terminal value). The safe actions have similar distributions (\textsc{left}, which tracks the invaders' movement, is slightly favoured). This example helps explain
why our approach is so successful: the distributional update keeps separated the low-value, ``losing'' event from the high-value, ``survival'' event, rather than average them into one (unrealizable) expectation.\footnote{Video: \url{http://youtu.be/yFBwyPuO2Vg}.}

One surprising fact is that the distributions are not concentrated on one or two values, in spite of the ALE's determinism, but are often close to Gaussians. We believe this is due to our discretizing the diffusion process induced by $\gamma$.
\begin{figure}[htb]
\begin{center}
\includegraphics[width=3.2in]{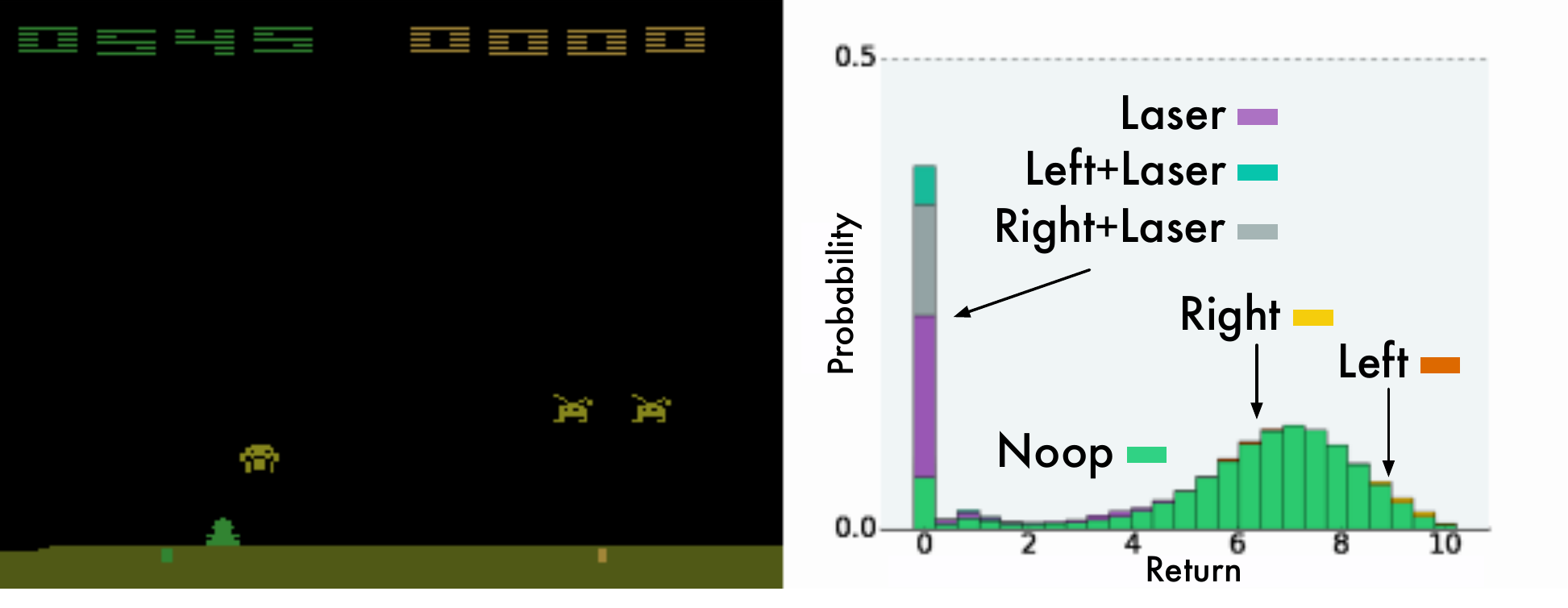}  %
\end{center}
\caption{Learned value distribution during an episode of \textsc{Space Invaders}. Different actions are shaded different colours. Returns below 0 (which do not occur in \textsc{Space Invaders}) are not shown here as the agent assigns virtually no probability to them.\label{fig:space_invaders_value_distribution}}
\end{figure}

\subsection{Varying the Number of Atoms}

We began by studying our algorithm's performance on the training games in relation to the number of atoms (Figure \ref{fig:atari_categorical}). For this experiment, we set $\epsilon = 0.05$. From the data, it is clear that using too few atoms can lead to poor behaviour, and that more always increases performance; this is not immediately obvious as we may have expected to saturate the network capacity. The difference in performance between the 51-atom version and DQN is particularly striking: the latter is outperformed in all five games, and in \textsc{Seaquest} we attain state-of-the-art performance. As an additional point of the comparison, the single-parameter Bernoulli algorithm performs better than DQN in 3 games out of 5, and is most notably more robust in \textsc{Asterix}.

One interesting outcome of this experiment was to find out that our method does pick up on stochasticity. \textsc{Pong} exhibits intrinsic randomness: the exact timing of the reward depends on internal registers and is truly unobservable. We see this clearly reflected in the agent's prediction (Figure \ref{fig:pong_bimodality}): over five consecutive frames, the value distribution shows two modes indicating the agent's belief that it has yet to receive a reward. Interestingly, since the agent's state does not include past rewards, it cannot even extinguish the prediction after receiving the reward, explaining the relative proportions of the modes.

\begin{figure*}[htb]
\begin{center}
\includegraphics[width=1.25in]{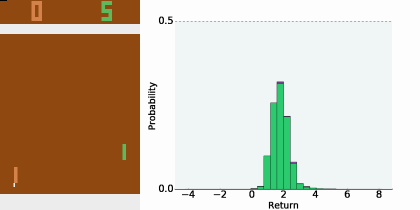}
\includegraphics[width=1.25in]{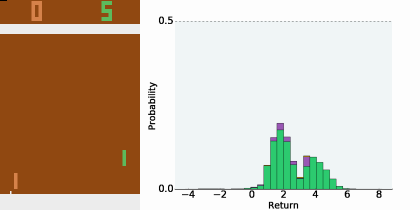}
\includegraphics[width=1.25in]{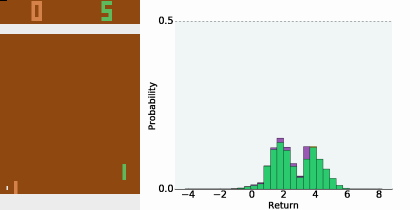}
\includegraphics[width=1.25in]{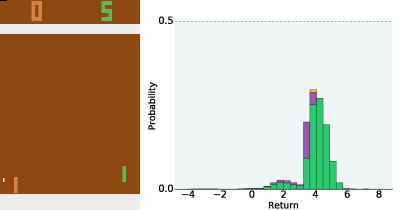}
\includegraphics[width=1.25in]{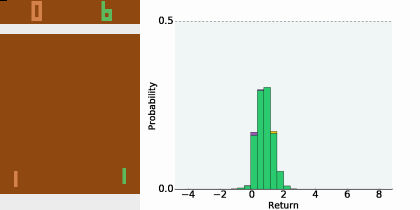}
\caption{Intrinsic stochasticity in \textsc{Pong}.\label{fig:pong_bimodality}}
\end{center}
\vspace{-1em}
\end{figure*}

\subsection{State-of-the-Art Results}

The performance of the 51-atom agent (from here onwards, C51) on the training games, presented in the last section, is particularly remarkable given that it involved none of the other algorithmic ideas present in state-of-the-art agents. We next asked whether incorporating the most common hyperparameter choice, namely a smaller training $\epsilon$, could lead to even better results. Specifically, we set $\epsilon = 0.01$ (instead of $0.05$); furthermore, every 1 million frames, we evaluate our agent's performance with $\epsilon = 0.001$. 

We compare our algorithm to DQN ($\epsilon = 0.01$), Double DQN \citep{vanhasselt16deep}, the Dueling architecture \citep{wang2016dueling}, and Prioritized Replay \citep{schaul16prioritized}, comparing the best evaluation score achieved during training.
We see that C51 significantly outperforms these other algorithms (Figures \ref{fig:perc_scores} and \ref{fig:allatari}). In fact, C51 surpasses the current state-of-the-art by a large margin in a number of games, most notably \textsc{Seaquest}. One particularly striking fact is the algorithm's good performance on sparse reward games, for example \textsc{Venture} and \textsc{Private Eye}. This suggests that value distributions are better able to propagate rarely occurring events. Full results are provided in the appendix.

We also include in the appendix (Figure \ref{fig:train_sup}) a comparison, averaged over 3 seeds, showing the number of games in which C51's training performance outperforms fully-trained DQN and human players. These results continue to show dramatic improvements, and are more representative of an agent's average performance. Within 50 million frames, C51 has outperformed a fully trained DQN agent on 45 out of 57 games. This suggests that the full 200 million training frames, and its ensuing computational cost, are unnecessary for evaluating reinforcement learning algorithms within the ALE.

The most recent version of the ALE contains a stochastic execution mechanism designed to ward against trajectory overfitting.%
Specifically, on each frame the environment rejects the agent's selected action with probability $p=0.25$.
Although DQN is mostly robust to stochastic execution, there are a few games in which its performance is reduced. On a score scale normalized with respect to the random and DQN agents, C51 obtains mean and median score improvements of $126\%$ and $21.5\%$ respectively, confirming the benefits of C51 beyond the deterministic setting.

\begin{figure}
\begin{tabular}{ l | r | r | r | r }
\multicolumn{1}{c}{} & \mbox{\textbf{Mean}} & \mbox{\textbf{Median}} & $>\, $\mbox{\textbf{H.B.}} & $>\,$\mbox{\textbf{DQN}} \\
\hline
\textsc{dqn}  &   228\% & 79\% & 24 & 0 \\
\textsc{ddqn}   &   307\% & 118\% & 33 & 43 \\
\textsc{Duel.}   &   373\% & 151\% & 37 & 50 \\
\textsc{Prior.}   &   434\% & 124\% & 39 & 48 \\
\textsc{Pr. Duel.}   &   592\% & 172\% & 39 & 44 \\
\hline
\textsc{C51}   &   \textbf{\textcolor{blue}{701\%}} & \textbf{\textcolor{blue}{178\%}} & \textbf{\textcolor{blue}{40}} & \textbf{\textcolor{blue}{50}} \\
\hline
\hline
\textsc{unreal}$^\dagger$ & 880\% & 250\% & - & - \\
\end{tabular}
\caption{Mean and median scores across 57 Atari games, measured as percentages of human baseline \citep[H.B.,][]{nair15massively}.}
\label{fig:perc_scores}
\end{figure}

\vspace{-5pt}
\begin{figure}[htb]
\begin{center}
\includegraphics[width=0.47\textwidth]{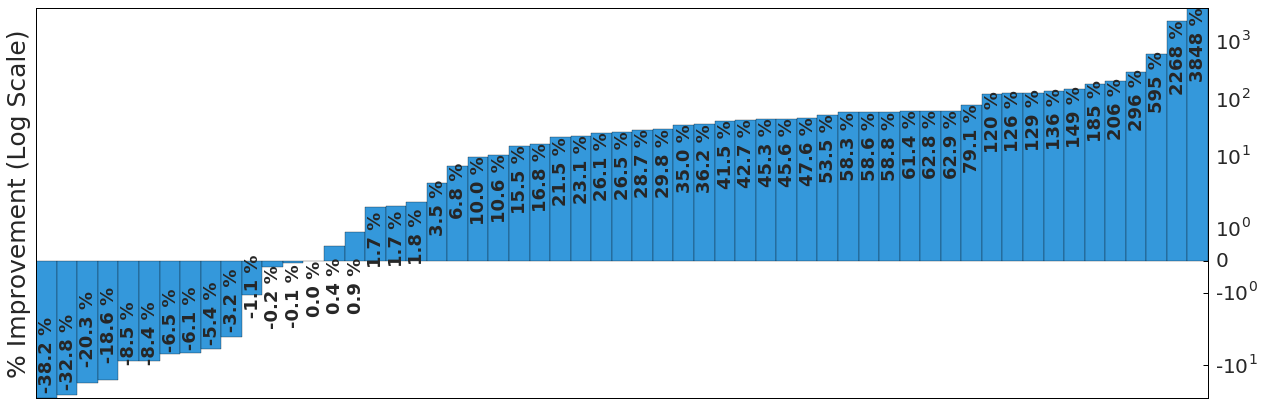}
\caption{Percentage improvement, per-game, of C51 over Double DQN, computed using \citeauthor{vanhasselt16deep}'s method.}
\label{fig:allatari}
\end{center}
\vspace{-1em}
\end{figure}

\section{Discussion}\label{sec:discussion}

In this work we sought a more complete picture of reinforcement learning, one that involves value distributions.
We found that learning value distributions is a powerful notion that allows us to surpass most gains previously made on Atari 2600, without further algorithmic adjustments.

\subsection{Why does learning a distribution matter?}
\blfootnote{$^\dagger$ The UNREAL results are not altogether comparable, as they were generated in the asynchronous setting with per-game hyperparameter tuning \citep{jaderberg17reinforcement}.}
It is surprising that, when we use a policy which aims to maximize expected return, we should see any difference in performance. The distinction we wish to make is that \emph{learning distributions matters in the presence of approximation}. We now outline some possible reasons.

\textbf{Reduced chattering.} Our results from Section \ref{sec:control} highlighted a significant instability in the Bellman optimality operator. When combined with function approximation, this instability may prevent the policy from converging, what \citet{gordon95stable} called \emph{chattering}. We believe the gradient-based categorical algorithm is able to mitigate these effects by effectively averaging the different distributions, similar to conservative policy iteration \citep{kakade02approximately}. While the chattering persists, it is integrated to the approximate solution.

\textbf{State aliasing.} Even in a deterministic environment, state aliasing may result in effective stochasticity.
\citet{mccallum95reinforcement}, for example, showed the importance of coupling representation learning with policy learning in partially observable domains. We saw an example of state aliasing in \textsc{Pong}, where the agent could not exactly predict the reward timing. Again, by explicitly modelling the resulting distribution we provide a more stable learning target.

\textbf{A richer set of predictions.} A recurring theme in artificial intelligence is the idea of an agent learning from a multitude of predictions (\citealt{caruana97multitask,utgoff02many,sutton11horde,jaderberg17reinforcement}). The distributional approach naturally provides us with a rich set of auxiliary predictions, namely: the probability that the return will take on a particular value. Unlike previously proposed approaches, however, the accuracy of these predictions is tightly coupled with the agent's performance.

\textbf{Framework for inductive bias.} The distributional perspective on reinforcement learning allows a more natural framework within which we can impose assumptions about the domain or the learning problem itself. In this work we used distributions with support bounded in $[\Vmin, \Vmax]$. Treating this support as a hyperparameter allows us to change the optimization problem by treating all extremal returns (e.g. greater than $\Vmax$) as equivalent.
Surprisingly, a similar value clipping in DQN significantly degrades performance in most games.
To take another example: interpreting the discount factor $\gamma$ as a proper probability, as some authors have argued, leads to a different algorithm.

\textbf{Well-behaved optimization.} It is well-accepted that the KL divergence between categorical distributions is a reasonably easy loss to minimize. This may explain some of our empirical performance. Yet early experiments with alternative losses, such as KL divergence between continuous densities, were not fruitful, in part because the KL divergence is insensitive to the values of its outcomes. A closer minimization of the Wasserstein metric should yield even better results than what we presented here.

In closing, we believe our results highlight the need to account for distribution in the design, theoretical or otherwise, of algorithms. 

\section*{Acknowledgements}

The authors acknowledge the important role played by their colleagues at DeepMind throughout the development of this work. Special thanks to Yee Whye Teh, Alex Graves, Joel Veness, Guillaume Desjardins, Tom Schaul, David Silver, Andre Barreto, Max Jaderberg, Mohammad Azar, Georg Ostrovski, Bernardo Avila Pires, Olivier Pietquin, Audrunas Gruslys, Tom Stepleton, Aaron van den Oord; and particularly Chris Maddison for his comprehensive review of an earlier draft. Thanks also to Marek Petrik for pointers to the relevant literature, and Mark Rowland for fine-tuning details in the final version.

\section*{Erratum}
The camera-ready copy of this paper incorrectly reported a mean score of 1010\% for C51. The corrected figure stands at 701\%, which remains higher than the other comparable baselines. The median score remains unchanged at 178\%. 

The error was due to evaluation episodes in one game (Atlantis) lasting over 30 minutes; in comparison, the other results presented here cap episodes at 30 minutes, as is standard. The previously reported score on Atlantis was 3.7 million; our 30-minute score is 841,075, which we believe is close to the achievable maximum in this time frame. Capping at 30 minutes brings our human-normalized score on Atlantis from 22824\% to a mere (!) 5199\%, unfortunately enough to noticeably affect the mean score, whose sensitivity to outliers is well-documented.

\begin{small}
\bibliography{value-distributions}
\bibliographystyle{icml2017}
\end{small}

\clearpage
\newpage

\appendix

\section{Related Work}

To the best of our knowledge, the work closest to ours are two papers \citep{morimura2010nonparametric,morimura10parametric} studying the distributional Bellman equation from the perspective of its cumulative distribution functions. The authors propose both parametric and nonparametric solutions to learn distributions for risk-sensitive reinforcement learning. They also provide some theoretical analysis for the policy evaluation setting, including a consistency result in the nonparametric case. By contrast, we also analyze the control setting, and emphasize the use of the distributional equations to improve approximate reinforcement learning.

The variance of the return has been extensively studied in the risk-sensitive setting. Of note, \citet{tamar2016learning} analyze the use of linear function approximation to learn this variance for policy evaluation, and \citet{prashanth13actorcritic} estimate the return variance in the design of a risk-sensitive actor-critic algorithm. \citet{mannor11meanvariance} provides negative results regarding the computation of a variance-constrained solution to the optimal control problem.

The distributional formulation also arises when modelling uncertainty. \citet{dearden98bayesian} considered a Gaussian approximation to the value distribution, and modelled the uncertainty over the parameters of this approximation using a Normal-Gamma prior. \citet{engel05reinforcement} leveraged the distributional Bellman equation to define a Gaussian process over the unknown value function. More recently, \citet{geist10kalman} proposed an alternative solution to the same problem based on unscented Kalman filters. We believe much of the analysis we provide here, which deals with the intrinsic randomness of the environment, can also be applied to modelling uncertainty.

Our work here is based on a number of foundational results, in particular concerning alternative optimality criteria. Early on, \citet{jaquette73markov} showed that a \emph{moment optimality} criterion, which imposes a total ordering on distributions, is achievable and defines a stationary optimal policy, echoing the second part of Theorem \ref{thm:control_convergence}. \citet{sobel82variance} is usually cited as the first reference to Bellman equations for the higher moments (but not the distribution) of the return. \citet{chung87discounted} provides results concerning the convergence of the distributional Bellman operator in total variation distance. \citet{white88mean} studies ``nonstandard MDP criteria'' from the perspective of optimizing the state-action pair occupancy.

A number of probabilistic frameworks for reinforcement learning have been proposed in recent years. The \emph{planning as inference} approach \citep{toussaint06probabilistic,hoffman09expectation} embeds the return into a graphical model, and applies probabilistic inference to determine the sequence of actions leading to maximal expected reward. \citet{wang08dual} considered the dual formulation of reinforcement learning, where one optimizes the stationary distribution subject to constraints given by the transition function \citep{puterman94markov}, in particular its relationship to linear approximation. Related to this dual is the Compress and Control algorithm \citet{veness15compress}, which describes a value function by learning a return distribution using density models. One of the aims of this work was to address the question left open by their work of whether one could be design a practical distributional algorithm based on the Bellman equation, rather than Monte Carlo estimation.

\section{Proofs}

\setcounter{lem}{0}
\setcounter{thm}{0}
\setcounter{prop}{0}

\begin{lem}[Partition lemma]
Let $A_1, A_2, \dots$ be a set of random variables describing a partition of $\Omega$, i.e. $A_i(\omega) \in \{0, 1\}$ and for any $\omega$ there is exactly one $A_i$ with $A_i(\omega) = 1$. Let $U, V$ be two random variables. Then
\begin{equation*}
d_p\big (U, V \big) \le \sum\nolimits_i d_p (A_i U, A_i V) .
\end{equation*}
\end{lem}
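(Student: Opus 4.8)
The plan is to bound $d_p(U,V)$ from above by producing a single coupling and controlling its $L_p$ cost, exploiting the mixture structure that the masking $A_iU$ imposes. First I would record the two structural facts that drive everything. Conditioning on the partition, $F_U = \sum_i \Pr\{A_i=1\}\, F_{U\mid A_i=1}$ and likewise for $V$, \emph{with the same weights} $\Pr\{A_i=1\}$, since the $A_i$ do not depend on whether we look at $U$ or $V$. Moreover $A_iU$ equals $U$ on $\{A_i=1\}$ and $0$ elsewhere, so $F_{A_iU}(t) = \Pr\{A_i=0\}\,\indic{t\ge 0} + \Pr\{A_i=1\}\, F_{U\mid A_i=1}(t)$; subtracting the analogous expression for $V$, the common $\indic{t\ge0}$ term cancels and $F_{A_iU} - F_{A_iV} = \Pr\{A_i=1\}\,(F_{U\mid A_i=1} - F_{V\mid A_i=1})$. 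Summing over $i$ yields the key identity $\sum_i (F_{A_iU} - F_{A_iV}) = F_U - F_V$.

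This identity settles $p=1$ at once: using $d_1(U,V) = \int_{\bR} |F_U(t) - F_V(t)|\,dt$ and the triangle inequality under the integral, $d_1(U,V) = \int_{\bR} \big| \sum_i (F_{A_iU} - F_{A_iV}) \big|\,dt \le \sum_i \int_{\bR} |F_{A_iU} - F_{A_iV}|\,dt = \sum_i d_1(A_iU, A_iV)$. For general $p$ I would instead use the coupling characterization $d_p(U,V) = \inf \norm{\bar U - \bar V}_p$ and assemble a coupling from the per-cell optimal ones. The natural first attempt: for each $i$ take the comonotone (inverse-c.d.f.) coupling attaining $d_p(A_iU, A_iV)$, and build a coupling of $(U,V)$ on a space partitioned into events $B_i$ with $\Pr\{B_i\} = \Pr\{A_i=1\}$, routing the genuine cell-$i$ mass through that coupling. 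Since the per-cell displacements then live on the disjoint events $B_i$, the cost splits as $\norm{\bar U - \bar V}_p = \big(\sum_i \norm{\bar U - \bar V}_{L_p(B_i)}^p\big)^{1/p} \le \sum_i \norm{\bar U - \bar V}_{L_p(B_i)}$ by $\ell_p \le \ell_1$, so it would suffice to bound each cell term by $d_p(A_iU, A_iV)$.

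The main obstacle is exactly this last bound, and it is where the masking bites. The law of $A_iU$ carries an extra atom of size $\Pr\{A_i=0\}$ at $0$, coming from outcomes \emph{outside} cell $i$, and in the one-dimensional optimal coupling this $0$-atom need not be matched to the $0$-atom of $A_iV$: the comonotone map can transport genuine cell-$i$ mass of $U$ onto $V$'s padding and vice versa whenever the supports of $F_{U\mid A_i=1}$ and $F_{V\mid A_i=1}$ straddle $0$ differently. Consequently the naive gluing that forces $\bar U \mid B_i \sim F_{U\mid A_i=1}$ pays the full conditional cost $\Pr\{A_i=1\}^{1/p}\, d_p(F_{U\mid A_i=1}, F_{V\mid A_i=1})$, which can \emph{strictly exceed} $d_p(A_iU, A_iV)$; so the coupling must be allowed to mix mass across cells rather than respect the partition exactly. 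I would close the gap by working directly with the inverse c.d.f.s and the identity above, expressing $F^{-1}_{A_iU} - F^{-1}_{A_iV}$ via monotone rearrangement and using the alignment of the shared $0$-atoms to reassemble the per-cell contributions into a valid coupling of $(U,V)$ whose cost is dominated by $\sum_i d_p(A_iU, A_iV)$. The $p=1$ identity is the clean prototype of this bookkeeping; carrying it through the quantile representation for general $p$ while keeping both marginals consistent is the crux of the argument.
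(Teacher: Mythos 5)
Your $p=1$ argument is correct and complete, and it is cleaner than the paper's own proof of that case. The paper instead conditions the coupling cost on the partition and asserts, at the key step, that one may choose couplings $Y_i \overset{D}{=} A_iU$, $Z_i \overset{D}{=} A_iV$ with $|Y_i - Z_i| = 0$ on $\{A_i = 0\}$ ``without increasing the expected norm,'' i.e.\ that $d_p^p(A_iU, A_iV)$ equals the infimum restricted to couplings that match the zero-padding to itself. For $p=1$ this is true, essentially because $|u-v| \le |u| + |v|$ means routing mass through the shared $0$-atom never beats direct matching; your c.d.f.\ identity $\sum_i (F_{A_iU} - F_{A_iV}) = F_U - F_V$ packages the same fact more transparently. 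But the obstruction you isolate for $p>1$ --- that the optimal coupling of $(A_iU, A_iV)$ can route genuine cell-$i$ mass through the padding, so that $d_p(A_iU, A_iV)$ is \emph{strictly smaller} than the conditional cost $\Pr\{A_i=1\}^{1/p} d_p(F_{U \mid A_i=1}, F_{V\mid A_i=1})$ --- is not a bookkeeping difficulty you failed to overcome. It is fatal: the lemma as stated is false for every $p > 1$, and the paper's proof breaks at exactly the step you refused to gloss over.

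Concretely, take a two-cell partition with $\Pr\{A_1 = 1\} = \Pr\{A_2 = 1\} = \tfrac12$, with $U = 2$, $V = -2$ on cell $1$ and $U = V = 10$ on cell $2$. Then $U \sim \tfrac12(\delta_2 + \delta_{10})$ and $V \sim \tfrac12(\delta_{-2} + \delta_{10})$, whose quantile functions differ by $4$ on $(0,\tfrac12]$ and $0$ above, so $d_p(U,V) = 4 \cdot 2^{-1/p}$. Meanwhile $A_1U \sim \tfrac12(\delta_0 + \delta_2)$ and $A_1V \sim \tfrac12(\delta_{-2} + \delta_0)$ have quantile functions at \emph{constant} distance $2$ (the comonotone coupling matches $U$'s cell mass to $V$'s padding and vice versa, exactly as you describe), so $d_p(A_1U, A_1V) = 2$; and $A_2U$, $A_2V$ share the law $\tfrac12(\delta_0 + \delta_{10})$, so $d_p(A_2U, A_2V) = 0$. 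Hence
\begin{equation*}
\sum\nolimits_i d_p(A_iU, A_iV) = 2 \; < \; 4 \cdot 2^{-1/p} = d_p(U,V) \quad \text{for all } p > 1,
\end{equation*}
including $p = \infty$ where $2 < 4$; equality holds only at $p=1$. On this same example the paper's claimed equality fails visibly: the padding-respecting infimum for cell $1$ is $(\tfrac12 \cdot 4^p)^{1/p} = 4 \cdot 2^{-1/p} > 2 = d_p(A_1U, A_1V)$, so forcing $|Y_1 - Z_1| = 0$ on $\{A_1 = 0\}$ \emph{does} increase the expected norm when $p > 1$. What survives --- and what your ``naive gluing'' construction actually proves --- is the conditional form $d_p^p(U,V) \le \sum_i \Pr\{A_i = 1\}\, d_p^p(F_{U\mid A_i=1}, F_{V \mid A_i=1})$, which is the inequality the downstream argument (the proof of the control-setting convergence theorem, where the unsolved-cell term is anyway crushed to $O(\delta^{1/p} B)$ via property P3) genuinely needs, at the cost of reworked constants. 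So: your $p=1$ proof matches the truth of the statement, your instinct not to trust the reassembly for $p>1$ was correct, and the crux you left open cannot be closed because the general-$p$ claim --- and the paper's proof of it --- is wrong as stated.
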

\begin{proof}
We will give the proof for $p < \infty$, noting that the same applies to $p = \infty$. 
Let $Y_i \Ddef A_i U$ and $Z_i \Ddef A_i V$, respectively. First note that
\begin{align}
d^p_p(A_i U, A_i V) &= \inf_{Y_i, Z_i} \expect \big [ \panorm{Y_i - Z_i} \big ] \nonumber \\
&= \inf_{Y_i, Z_i} \expect \Big [ \expect \big [ \panorm{Y_i - Z_i} \cbar A_i \big ] \Big ] \nonumber .
\end{align}
Now, $\panorm{A_i U - A_i V} = 0$ whenever $A_i = 0$. It follows that we can choose $Y_i, Z_i$ so that also $\panorm{Y_i - Z_i} = 0$ whenever $A_i = 0$, without increasing the expected norm. Hence
\begin{align}
d^p_p(A_i U, A_i V) &= \nonumber \\
&\hspace{-3.5em}\inf_{Y_i, Z_i} \Pr\{A_i = 1\} \expect \big [ \panorm{Y_i - Z_i} \cbar A_i = 1 \big ]. \label{eqn:norm_dp_ai}
\end{align}
Next, we claim that
\begin{align}
& \inf_{U, V} \sum\nolimits_i \Pr\{A_i = 1\} \expect \Big [ \big | A_i U - A_i V \big |^p \cbar A_i = 1 \Big ] \label{eqn:infimum_bound} \\
&\hspace{2em} \le \inf_{\substack{Y_1, Y_2, \dots\\Z_1, Z_2, \dots}} \sum\nolimits_i \Pr\{A_i = 1\} \expect \Big [ | Y_i - Z_i \big |^p \cbar A_i = 1 \Big ] . \nonumber
\end{align}
Specifically, the left-hand side of the equation is an infimum over all r.v.'s whose cumulative distributions are $F_U$ and $F_V$, respectively, while the right-hand side is an infimum over sequences of r.v's $Y_1, Y_2, \dots$ and $Z_1, Z_2, \dots$ whose cumulative distributions are $F_{A_i U}, F_{A_i V}$, respectively.
To prove this upper bound, consider the c.d.f. of $U$:
\begin{align*}
F_U(y) &= \Pr \{ U \le y \} \\
&= \sum\nolimits_i \Pr \{ A_i = 1 \} \Pr \{ U \le y \cbar A_i = 1 \} \\
&= \sum\nolimits_i \Pr \{ A_i = 1 \} \Pr \{ A_i U \le y \cbar A_i = 1 \} .
\end{align*}
Hence the distribution $F_U$ is equivalent, in an almost sure sense, to one that first picks an element $A_i$ of the partition, then picks a value for $U$ conditional on the choice $A_i$. On the other hand, the c.d.f. of $Y_i \Deq A_i U$ is
\begin{align*}
F_{A_i U}(y) &= \Pr \{ A_i = 1 \} \Pr \{ A_i U \le y \cbar A_i = 1 \} \\
&\hspace{1em} + \Pr \{ A_i = 0 \} \Pr \{ A_i U \le y \cbar A_i = 0 \} \\
&= \Pr \{ A_i = 1 \} \Pr \{ A_i U \le y \cbar A_i = 1 \} \\
&\hspace{1em} + \Pr \{ A_i = 0 \} \indic{y \ge 0} .
\end{align*}
Thus the right-hand side infimum in \eqnref{infimum_bound} has the additional constraint that it must preserve the conditional c.d.fs, in particular when $y \ge 0$.
Put another way, instead of having the freedom to completely reorder the mapping $U : \Omega \to \bR$, we can only reorder it within each element of the partition. We now write
\begin{align*}
d^p_p(U, V) &= \inf_{U,V} \pnorm{U - V} \\
&\hspace{-3.5em} = \inf_{U, V} \expect \big [ \panorm{U - V} \big ] \\
&\hspace{-3.5em} \overset{(a)}{=} \inf_{U, V} \sum\nolimits_i \Pr \{ A_i = 1 \} \expect \big [ \panorm{U - V} \cbar A_i = 1 \big ] \\
&\hspace{-3.5em} = \inf_{U, V} \sum\nolimits_i \Pr \{ A_i = 1 \} \expect \big [ \panorm{A_i U - A_i V} \cbar A_i = 1 \big ],
\end{align*}
where (a) follows because $A_1, A_2, \dots$ is a partition. Using \eqnref{infimum_bound}, this implies
\begin{align*}
d^p_p(U, V) & \\
&\hspace{-3.5em} = \inf_{U, V} \sum\nolimits_i \Pr\{A_i = 1\} \expect \Big [ \big | A_i U - A_i V \big |^p \cbar A_i = 1 \Big ] \\
&\hspace{-3.5em} \le \inf_{\substack{Y_1, Y_2, \dots\\Z_1, Z_2, \dots}} \sum\nolimits_i \Pr\{A_i = 1\} \expect \Big [ \big | Y_i - Z_i \big |^p \cbar A_i = 1 \Big ] \\
&\hspace{-3.5em} \overset{(b)}{=} \sum\nolimits_i \inf_{Y_i, Z_i} \Pr\{A_i = 1\} \expect \Big [ \big | Y_i - Z_i \big |^p \cbar A_i = 1 \Big ] \\
&\hspace{-3.5em} \overset{(c)}{=} \sum\nolimits_i d_p(A_i U, A_i V), 
\end{align*}
because in (b) the individual components of the sum are independently minimized; and (c) from \eqnref{norm_dp_ai}.
\end{proof}
\begin{lem}
$\dip$ is a metric over value distributions.
\end{lem}
\begin{proof}
The only nontrivial property is the triangle inequality. For any value distribution $Y \in \cZ$, write
\begin{align*}
\dip(Z_1, Z_2) &= \sup_{x,a} d_p(Z_1(x,a), Z_2(x,a)) \\
&\hspace{-4.25em} \overset{(a)}{\le}\sup_{x,a} \left [ d_p(Z_1(x,a), Y(x,a)) + d_p(Y(x,a), Z_2(x,a)) \right ]\\
&\hspace{-4.25em} \le \sup_{x,a} d_p(Z_1(x,a), Y(x,a)) + \sup_{x,a} d_p(Y(x,a), Z_2(x,a)) \\
&\hspace{-4.25em} = \dip(Z_1, Y) + \dip(Y, Z_2) ,
\end{align*}
where in (a) we used the triangle inequality for $d_p$.
\end{proof}

\begin{lem}
$\cTpi : \cZ \to \cZ$ is a $\gamma$-contraction in $\dip$.
\end{lem}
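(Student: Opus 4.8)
The plan is to prove the contraction pointwise and then take a supremum, exploiting that $\dip(Z_1,Z_2) = \sup_{x,a} d_p(Z_1(x,a), Z_2(x,a))$. Fix a state-action pair $(x,a)$. Both $\cTpi Z_1(x,a)$ and $\cTpi Z_2(x,a)$ have the form $R(x,a) + \gamma P^\pi Z_i(x,a)$ with the \emph{same} reward term $R(x,a)$, which by our standing assumption is independent of the return components. I would first strip off the shared reward with property (P2) and then extract the discount with property (P1):
\begin{equation*}
d_p\big(\cTpi Z_1(x,a), \cTpi Z_2(x,a)\big) \le d_p\big(\gamma P^\pi Z_1(x,a), \gamma P^\pi Z_2(x,a)\big) \le \gamma\, d_p\big(P^\pi Z_1(x,a), P^\pi Z_2(x,a)\big).
\end{equation*}
This reduces the whole statement to showing that the transition operator $P^\pi$ is a nonexpansion, namely $d_p(P^\pi Z_1(x,a), P^\pi Z_2(x,a)) \le \dip(Z_1,Z_2)$.

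This nonexpansion is the step I expect to be the main obstacle, since it is where the randomness of the transition interacts with the metric. Recall that $P^\pi Z(x,a) \Ddef Z(X',A')$ is a mixture of the next-state distributions, with mixing law given by the transition kernel common to $Z_1$ and $Z_2$. The idea is to construct a single coupling realizing both sides simultaneously: draw one next state-action pair $(X',A')$ from the shared kernel, and conditioned on $(X',A') = (x',a')$ use the $d_p$-optimal coupling of $Z_1(x',a')$ and $Z_2(x',a')$. Because the mixing weights coincide, evaluating the $p$-th moment of this coupling by conditioning on $(X',A')$ gives
\begin{align*}
d_p^p\big(P^\pi Z_1(x,a), P^\pi Z_2(x,a)\big) &\le \expect_{(X',A')}\big[\, d_p^p\big(Z_1(X',A'), Z_2(X',A')\big)\,\big] \\
&\le \sup_{x',a'} d_p^p\big(Z_1(x',a'), Z_2(x',a')\big) = \dip^p(Z_1,Z_2),
\end{align*}
and taking $p$-th roots yields the claimed nonexpansion (the $p=\infty$ case being the obvious essential-supremum analogue). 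This shared-transition coupling is precisely the continuous counterpart of partitioning $\Omega$ according to the realized $(X',A')$, in the spirit of the partition lemma.

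Chaining the two displays gives $d_p(\cTpi Z_1(x,a), \cTpi Z_2(x,a)) \le \gamma\,\dip(Z_1,Z_2)$ uniformly in $(x,a)$, so taking the supremum over $(x,a)$ delivers $\dip(\cTpi Z_1, \cTpi Z_2) \le \gamma\,\dip(Z_1,Z_2)$. The only remaining bookkeeping is to confirm that $\cTpi$ indeed maps $\cZ$ into $\cZ$ (bounded moments are preserved under the affine update and the mixture), and to note that the independence assumption on $R$, $P^\pi$, and the next-state returns is what licenses both the use of (P2) and the conditioning step above.
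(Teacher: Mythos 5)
Your proposal is correct and follows essentially the same route as the paper's proof: strip the common reward via (P2), pull out $\gamma$ via (P1), and bound $d_p(P^\pi Z_1(x,a), P^\pi Z_2(x,a))$ by $\sup_{x',a'} d_p(Z_1(x',a'), Z_2(x',a'))$ before taking the outer supremum. The only difference is that you spell out the shared-transition coupling (which the paper asserts simply ``from the definition of $P^\pi$''), and this is a sound filling-in of that step --- on the reals the conditional optimal coupling can be realized measurably by the inverse-c.d.f.\ transform with a common uniform variable, so no selection issue arises.
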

\begin{proof}
Consider $Z_1, Z_2 \in \cZ$. By definition,
\begin{equation}
\dip(\cTpi Z_1, \cTpi Z_2) = \sup_{x,a} d_p(\cTpi Z_1(x, a), \cTpi Z_2(x, a)) . \label{eqn:definition_tcpi_distance}
\end{equation}
By the properties of $d_p$, we have
\begin{small}
\begin{align*}
d_p(\cTpi Z_1(x,a), \cTpi Z_2(x,a)) &\\
&\hspace{-10em}= d_p(R(x,a) + \gamma P^\pi Z_1(x,a), R(x,a) + \gamma P^\pi Z_2(x,a)) \\
&\hspace{-10em}\le \gamma d_p(P^\pi Z_1(x,a), P^\pi Z_2(x,a)) \\
&\hspace{-10em}\le \gamma \sup_{x',a'} d_p(Z_1(x',a'), Z_2(x',a')),
\end{align*}
\end{small}
where the last line follows from the definition of $P^\pi$ (see \eqnref{policy_operator}). Combining with \eqnref{definition_tcpi_distance} we obtain
\begin{align*}
\hspace{-1.5em}\dip(\cTpi Z_1, \cTpi Z_2) &= \sup_{x,a} d_p(\cTpi Z_1(x,a), \cTpi Z_2(x,a)) \\
&\le \gamma \sup_{x', a'} d_p(Z_1(x',a'), Z_2(x', a')) \hspace{-2em} \\
&= \gamma \dip(Z_1, Z_2) .\qedhere
\end{align*}
\end{proof}

\begin{prop}[Sobel, 1982]
Consider two value distributions $Z_1, Z_2 \in \cZ$, and write $\Var(Z_i)$ to be the vector of variances of $Z_i$. Then
\begin{align*}
\infnorm{\expect \cTpi Z_1 - \expect \cTpi Z_2} &\le \gamma \infnorm{\expect Z_1 - \expect Z_2} \text{, and} \\
\infnorm{\Var(\cTpi Z_1) - \Var(\cTpi Z_2)} &\le \gamma^2 \infnorm{\Var Z_1 - \Var Z_2} .
\end{align*}
\begin{proof}
The first statement is standard, and its proof follows from $\expect \cTpi Z = \cTpi \expect Z$, where the second $\cTpi$ denotes the usual operator over value functions. Now, by independence of $R$ and $P^\pi Z_i$:
\begin{align*}
\Var(\cTpi Z_i(x,a)) &= \Var\Big(R(x,a) + \gamma P^\pi Z_i(x,a)\Big)  \\
&= \Var(R(x,a)) + \gamma^2 \Var(P^\pi Z_i(x,a)) .
\end{align*}
And now
\begin{align*}
\infnorm{\Var (\cTpi Z_1) - \Var(\cTpi Z_2)} & \\
&\hspace{-10em}= \sup_{x,a} \big | \Var(\cTpi Z_1(x,a)) - \Var(\cTpi Z_2(x,a)) \big | \\
&\hspace{-10em}= \sup_{x,a} \gamma^2 \big | \left [ \Var(P^\pi Z_1(x,a)) - \Var(P^\pi Z_2(x,a)) \right ] \big | \\
&\hspace{-10em}= \sup_{x,a} \gamma^2 \big | \expect \left [ \Var(Z_1(X', A')) - \Var(Z_2(X', A')) \right ] \big | \\
&\hspace{-10em}\le \sup_{x',a'} \gamma^2 \big | \Var(Z_1(x',a')) - \Var(Z_2(x', a')) \big | \\
&\hspace{-10em}\le \gamma^2 \infnorm{\Var Z_1 - \Var Z_2} . \qedhere
\end{align*}
\end{proof}
\end{prop}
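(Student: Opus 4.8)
The plan is to prove the two inequalities separately, dispatching the mean quickly and spending the effort on the variance. For the mean, I would note that taking expectations commutes with the distributional operator: from \eqnref{distributional_bellman_operator_pe} and \eqnref{policy_operator} one gets $\expect \cTpi Z = \expect R + \gamma \expect_{P,\pi} \expect Z = \cTpi(\expect Z)$, where on the right $\cTpi$ now denotes the ordinary Bellman operator \eqnref{bellman_operator_pe} acting on the value function $\expect Z \in \bR^{\cX\times\cA}$. Since that operator is a $\gamma$-contraction in $\infnorm{\cdot}$ \citep{bertsekas96neurodynamic}, applying it to $\expect Z_1$ and $\expect Z_2$ gives the first inequality at once.

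For the variance, the first move is to exploit the independence, built into \eqnref{distributional_bellman_operator_pe}, of the reward $R(x,a)$ from the discounted next-state term $\gamma P^\pi Z_i(x,a)$. Independence makes variance additive, so $\Var(\cTpi Z_i(x,a)) = \Var(R(x,a)) + \gamma^2 \Var(P^\pi Z_i(x,a))$. The key consequence is that the reward-variance term $\Var(R(x,a))$ is the same for $i=1,2$ — it does not involve $Z_i$ — and therefore cancels upon subtraction, leaving $\Var(\cTpi Z_1(x,a)) - \Var(\cTpi Z_2(x,a)) = \gamma^2\big[\Var(P^\pi Z_1(x,a)) - \Var(P^\pi Z_2(x,a))\big]$ and already exposing the $\gamma^2$ factor.

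The remaining and most delicate step is to control $\Var(P^\pi Z_i(x,a))$. Using \eqnref{policy_operator}, $P^\pi Z_i(x,a)$ is distributed as $Z_i(X',A')$ for the random next state-action $(X',A') \sim P(\cdot\cbar x,a)\,\pi(\cdot\cbar X')$; the aim is to reduce its variance to the expected conditional variance $\expect[\Var(Z_i(X',A'))]$, so that the difference becomes $\gamma^2\expect[\Var(Z_1(X',A')) - \Var(Z_2(X',A'))]$. A single expectation is an average, hence bounded in absolute value by $\sup_{x',a'}|\Var Z_1(x',a') - \Var Z_2(x',a')| = \infnorm{\Var Z_1 - \Var Z_2}$, and taking the supremum over $(x,a)$ on the left then yields the claim. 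I expect this conditional-variance reduction to be the main obstacle: decomposing the variance of the mixture $P^\pi Z_i$ over the random next state by the law of total variance produces, besides the within-state term $\expect[\Var(Z_i(X',A'))]$, a between-state term measuring the spread of the conditional means $\expect Z_i(X',A')$. The whole argument hinges on this between-state piece being handled correctly — cancelling in the difference or otherwise dominated — so that the clean bound by $\infnorm{\Var Z_1 - \Var Z_2}$ survives; this is the step I would scrutinize most carefully.
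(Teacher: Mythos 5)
Your plan tracks the paper's proof step for step: the mean via $\expect \cTpi Z = \cTpi \expect Z$ and the standard $\gamma$-contraction, the variance via the independence of $R(x,a)$ and $\gamma P^\pi Z_i(x,a)$ so that the reward-variance term cancels, leaving $\gamma^2[\Var(P^\pi Z_1(x,a)) - \Var(P^\pi Z_2(x,a))]$. But your proposal stalls exactly at the step you flag, and that flag marks a genuine gap: you never show that the between-state term cancels or is dominated, and in general it is not. By the law of total variance, $\Var(P^\pi Z_i(x,a)) = \expect[\Var(Z_i(X',A'))] + \Var(\expect Z_i(X',A'))$, so after subtraction there remains a residual $\gamma^2[\Var(Q_1(X',A')) - \Var(Q_2(X',A'))]$ with $Q_i := \expect Z_i$, which vanishes only when the means agree. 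Without that, the clean bound fails outright: take $Z_1 \equiv 0$ and $Z_2$ a deterministic (zero-variance) value function whose mean $Q_2$ is non-constant across the successors of some $(x,a)$ with a stochastic transition. Then $\Var Z_1 = \Var Z_2 = 0$, yet $\Var(\cTpi Z_1(x,a)) - \Var(\cTpi Z_2(x,a)) = -\gamma^2 \Var(Q_2(X',A')) \ne 0$, so the left-hand side is positive while the right-hand side is zero.

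It is worth noting that the paper's own proof glosses over precisely this point: it writes $\Var(P^\pi Z_i(x,a)) = \expect[\Var(Z_1(X',A'))]$-style equalities directly, silently dropping the $\Var(\expect Z_i(X',A'))$ terms. That step is legitimate only under the implicit assumption that $\expect Z_1 = \expect Z_2$, the reading consistent with Sobel's original result, where the variance recursion is taken at the fixed mean $Q^\pi$ so that the between-state pieces cancel identically in the difference. So your instinct to scrutinize the conditional-variance reduction was exactly right, but a complete proof must resolve it rather than defer it: either add the hypothesis $\expect Z_1 = \expect Z_2$ (after which the between-state terms cancel and your final averaging bound $|\expect[\Var(Z_1(X',A')) - \Var(Z_2(X',A'))]| \le \infnorm{\Var Z_1 - \Var Z_2}$ goes through verbatim), or restate the proposition as a $\gamma^2$-contraction of the variance operator with the mean component held fixed.
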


\begin{lem}
Let $Z_1, Z_2 \in \cZ$. Then
\begin{equation*}
\infnorm{\expect \cT Z_1 - \expect \cT Z_2} \le \gamma \infnorm{\expect Z_1 - \expect Z_2},
\end{equation*}
and in particular $\expect Z_k \to Q^*$ exponentially quickly.
\end{lem}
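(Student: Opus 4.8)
The plan is to reduce the statement to the well-known contraction property of the \emph{expected} Bellman optimality operator of \eqnref{bellman_optimality_operator}. The key observation is that, although the distributional operator $\cT$ may select different greedy policies when applied to $Z_1$ and to $Z_2$, passing to expectations collapses each such choice into the familiar maximum, so that $\expect \cT Z = \cT \expect Z$ for every $Z \in \cZ$, where the operator on the right is the scalar one.

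First I would fix $Z \in \cZ$ and let $\pi \in \cG_Z$ be a greedy policy realizing $\cT Z = \cTpi Z$. Taking expectations in the definition \eqnref{distributional_bellman_operator_pe}, and using linearity of $\expect$ together with the independence of $R$ and the transition,
\begin{equation*}
\expect \cT Z(x,a) = \expect R(x,a) + \gamma \expect_{P,\pi} \expect Z(x',a') .
\end{equation*}
By the definition of $\cG_Z$ we have $\sum_{a'} \pi(a' \cbar x') \expect Z(x',a') = \max_{a'} \expect Z(x', a')$ for every $x'$, so the inner expectation over $\pi$ is exactly the maximum and the right-hand side equals $\cT \expect Z(x,a)$. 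Crucially this identity holds for \emph{any} admissible tie-breaking rule, so it applies simultaneously to $Z_1$ and $Z_2$ even when their greedy policies differ.

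The second step is to invoke the standard fact \citep[e.g.][]{bertsekas96neurodynamic} that the scalar optimality operator is a $\gamma$-contraction in $\infnorm{\cdot}$. Combining this with the commutation identity gives
\begin{align*}
\infnorm{\expect \cT Z_1 - \expect \cT Z_2} &= \infnorm{\cT \expect Z_1 - \cT \expect Z_2} \\
&\le \gamma \infnorm{\expect Z_1 - \expect Z_2} ,
\end{align*}
which is the claimed bound. Exponential convergence then follows by iteration: applying the commutation at each step yields $\expect Z_k = \cT^k \expect Z_0$, and since the scalar operator has unique fixed point $Q^*$, we get $\infnorm{\expect Z_k - Q^*} \le \gamma^k \infnorm{\expect Z_0 - Q^*} \to 0$.

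I do not expect a genuine obstacle here; the one point that needs care is checking that the greedy-to-maximum collapse is insensitive to the particular policy that $\cT$ happens to select, which is precisely what the definition of $\cG_Z$ delivers. Everything else is linearity of expectation and the classical scalar contraction bound.
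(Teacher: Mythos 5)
Your proposal is correct and follows essentially the same route as the paper's proof: establish the commutation identity $\expect \cT Z = \cT \expect Z$ (via linearity of expectation and the greedy-to-maximum collapse from the definition of $\cG_Z$) and then invoke the classical $\gamma$-contraction of the scalar optimality operator in $\infnorm{\cdot}$. You spell out the tie-breaking insensitivity and the iteration to $Q^*$ more explicitly than the paper, which states the commutation as a one-line consequence of linearity, but the argument is the same.
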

\begin{proof}
The proof follows by linearity of expectation. Write $\cT_D$ for the distributional operator and $\cT_E$ for the usual operator. Then
\begin{align*}
\infnorm{\expect \cT_D Z_1 - \expect \cT_D Z_2} &= \infnorm{\cT_E \expect Z_1 - \cT_E \expect Z_2} \\
&\le \gamma \infnorm{Z_1 - Z_2}. \qedhere
\end{align*}
\end{proof}

\begin{thm}[Convergence in the control setting]
Let $Z_k := \cT Z_{k-1}$ with $Z_0 \in \cZ$. Let $\cX$ be measurable and suppose that $\cA$ is finite. Then
\begin{equation*}
\lim_{k \to \infty} \inf_{Z^{**} \in \cZ^{**}} d_p (Z_k(x,a), Z^{**}(x,a)) = 0 \quad \forall x, a .
\end{equation*}
If $\cX$ is finite, then $Z_k$ converges to $\cZ^{**}$ uniformly. Furthermore, if there is a total ordering $\prec$ on $\Pi^*$, such that for any $Z^* \in \cZ^*$,
\begin{equation*}
\cT Z^* = \cT^\pi Z^* \; \text{with} \; \pi \in \cG_{Z^*}, \; \pi \prec \pi' \; \; \forall \pi' \in \cG_{Z^*} \setminus \{\pi \},
\end{equation*}
then $\cT$ has a unique fixed point $Z^* \in \cZ^*$.%
\end{thm}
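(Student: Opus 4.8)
The plan is to combine the exponential convergence of the means, Lemma~\ref{lem:exponential_convergence_of_mean}, with a coupling argument that identifies a long ``optimal prefix'' of $Z_k$ with an element of $\cZ^{**}$. Write $\epsilon_j := \infnorm{\expect Z_j - Q^*}$, which tends to $0$ exponentially, and let $\pi^{(j)} \in \cG_{Z_{j-1}}$ be the greedy policy $\cT$ selects at iterate $j$, so that unrolling $Z_k = \cT Z_{k-1}$ gives
\[ Z_k(x,a) \Deq \sum_{t=0}^{k-1} \gamma^t R(X_t, A_t) + \gamma^k Z_0(X_k, A_k), \qquad A_t \sim \pi^{(k-t+1)}(\cdot \cbar X_t). \]
First I would show the greedy policies are eventually near-optimal: since $\pi^{(j)}$ maximizes $\expect Z_{j-1}$, which lies within $\epsilon_{j-1}$ of $Q^*$, the chosen action satisfies $Q^*(x,\pi^{(j)}(x)) \ge \max_a Q^*(x,a) - 2\epsilon_{j-1}$, i.e. it is $2\epsilon_{j-1}$-optimal at every $x$; in particular it is \emph{exactly} optimal at any state whose action gap exceeds $2\epsilon_{j-1}$.

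For the finite-$\cX$ uniform statement this is clean. Finiteness of $\cX$ and $\cA$ gives a uniform gap $\Delta>0$ between optimal and strictly suboptimal actions, so there is $K$ with $\epsilon_j<\Delta/2$ for $j\ge K$, whence $\pi^{(k)},\dots,\pi^{(K+1)}$ are all optimal; these govern the first $m:=k-K$ steps of the unrolling. I would then take $Z^{**}_k\in\cZ^{**}$ to be generated by the nonstationary optimal policy $(\pi^{(k)},\dots,\pi^{(K+1)},\pi^\ast,\pi^\ast,\dots)$ for a fixed $\pi^\ast\in\Pi^\ast$, and couple $Z_k$ with $Z^{**}_k$ so that they share rewards and transitions over these $m$ identical-policy steps. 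The returns then agree up to their $\gamma^m$-discounted tails, giving $d_p(Z_k(x,a),Z^{**}_k(x,a))\le \gamma^{k-K} B$, where $B$ bounds the range of returns (finite by the bounded-moment assumption defining $\cZ$). This bound is uniform in $(x,a)$ and vanishes, and taking the infimum over $\cZ^{**}$ yields uniform convergence.

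The main obstacle is the pointwise statement for merely measurable $\cX$, where no uniform gap is available. Here I would fix $(x,a)$ and $\epsilon$, truncate at horizon $H$ with $\gamma^H B<\epsilon/2$, and build $Z^{**}_k$ from optimal policies $\tilde\pi^{(j)}$ agreeing with $\pi^{(j)}$ wherever the latter is optimal and substituting an optimal action elsewhere. Coupling the two processes, they coincide until the first time $\tau$ the trajectory hits a state where $\pi^{(j)}$ is suboptimal, which by the near-optimality estimate can only be a state of action gap in $(0,2\epsilon_{j-1})$. Thus $d_p(Z_k(x,a),Z^{**}_k(x,a))\le B\,\|\gamma^\tau\|_p$, and the delicate point is to show $\Pr\{\tau\le H\}\to 0$ as $k\to\infty$: since $\epsilon_{j}\to0$, the offending set $\{0<\text{gap}<2\epsilon_{j}\}$ shrinks to the empty set (exact ties are harmless, a greedy choice there being optimal), so continuity of measure forces the hitting probability within $H$ steps to vanish. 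Making this uniform over the $k$-dependent trajectory laws is exactly where the measurability hypothesis and a dominated-convergence argument are needed.

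Finally, for the unique fixed point I would exploit that every $Z^\ast\in\cZ^\ast$ has $\expect Z^\ast=Q^\ast$, so $\cG_{Z^\ast}=\Pi^\ast$ exactly and the tie-breaking rule always selects the $\prec$-minimal optimal policy $\hat\pi$, independent of $Z^\ast$. For existence, $\cT Z^{\hat\pi}=\cT^{\hat\pi} Z^{\hat\pi}=Z^{\hat\pi}$, the last equality because $Z^{\hat\pi}$ is the unique $\cT^{\hat\pi}$-fixed point furnished by the policy-evaluation contraction, Lemma~\ref{lem:contraction_pe}. For uniqueness, any fixed point $Z^\ast\in\cZ^\ast$ obeys $Z^\ast=\cT Z^\ast=\cT^{\hat\pi}Z^\ast$, forcing $Z^\ast=Z^{\hat\pi}$ again by uniqueness of the $\cT^{\hat\pi}$-fixed point.
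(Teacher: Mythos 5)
Your uniqueness argument is exactly the paper's (on $\cZ^*$ every greedy set equals $\Pi^*$, so the ordering makes $\cT$ coincide with $\cT^{\hat\pi}$ there, and Lemma~\ref{lem:contraction_pe} gives existence and uniqueness of the fixed point), and your finite-$\cX$ argument is sound and in fact somewhat sharper than the paper's: by exploiting the uniform action gap $\Delta>0$ you couple $Z_k$ with an explicit nonstationary optimal value distribution over the first $k-K$ steps and obtain the explicit rate $\gamma^{k-K}B$, whereas the paper obtains uniformity only as a corollary of its general machinery (observing that $\cX_k=\cX$ for $k$ large). Your opening step --- greedy policies are $2\epsilon_{j-1}$-optimal and hence exactly optimal wherever the gap exceeds $2\epsilon_{j-1}$ --- is precisely the paper's definition of the solved set $\cXk$ and its Lemma~\ref{lem:x_k_grows_monotonically}.

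The genuine gap is in the measurable-$\cX$ pointwise case, at exactly the step you flag but do not carry out: showing $\Pr\{\tau \le H\}\to 0$. The appeal to ``continuity of measure plus dominated convergence'' does not work as a one-shot argument, because there is no fixed sequence of measures to pass to the limit: the law of the coupled trajectory changes with $k$, since the greedy policies $\pi^{(j)}$ (and your substitutes $\tilde\pi^{(j)}$) may break ties among optimal actions differently at every iteration, even at solved states. What is needed is a uniform statement over all optimal action choices, established by induction on the look-ahead depth: the one-step hitting probability $\max_{a^*}P(\{0<\mathrm{gap}<2\epsilon_j\}\mid x,a^*)$ tends to $0$ pointwise (finitely many actions), and one must propagate this through $H$ steps uniformly over the policy sequence. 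This induction is precisely what the paper's recursively defined sets $\cX_{k,i}$ and Lemma~\ref{lem:x_k_i_grows_monotonically} encode --- membership in $\cX_{k,i}$ guarantees mass at least $1-\delta$ on $\cX_{k-1,i-1}$ under \emph{every} optimal action (hence independently of which greedy sequence $\cT$ happens to produce), and Lemma~\ref{lem:partition_lemma} with property P3 converts the $\delta$ escape mass at each of $i$ steps into the $\delta B/(1-\gamma)$ error term. Your stopping-time formulation is morally equivalent and could be completed along these lines, but as written the proposal leaves a placeholder at the theorem's central analytic difficulty; everything else in the general case (the modified policies $\tilde\pi^{(j)}$ giving an exact element of $\cZ^{**}$, the bound $B\|\gamma^\tau\|_p$) is a legitimate repackaging of the paper's comparison of $Z_k=\cT^{\bar\pi_k}Z_{k-i+1}$ with $\cT^{\bar\pi_k}Z^*$.
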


The gist of the proof of Theorem \ref{thm:control_convergence} consists in showing that for every state $x$, there is a time $k$ after which the greedy policy w.r.t. $Q_k$ is mostly optimal. 
To clearly expose the steps involved, we will first assume a unique (and therefore deterministic) optimal policy $\pi^*$, and later return to the general case; we will denote the optimal action at $x$ by $\pi^*(x)$.
For notational convenience, we will write $Q_k := \expect Z_k$ and $\cG_k := \cG_{Z_k}$.
Let $\Vrange := 2 \sup_{Z \in \cZ} \infnorm{Z} < \infty$ and let $\epsilon_k := \gamma^k \Vrange$. 
We first define the set of states $\cXk \subseteq \cX$ whose values must be sufficiently close to $Q^*$ at time $k$:
\begin{equation}
\cXk := \Big \{ x : Q^*(x, \pi^*(x)) - \max_{a \ne \pi^*(x)} Q^*(x,a) > 2 \epsilon_k \Big \} .\label{eqn:defn_cxk}
\end{equation}
Indeed, by Lemma \ref{lem:exponential_convergence_of_mean}, we know that after $k$ iterations
\begin{equation*}
| Q_k(x,a) - Q^*(x,a) | \le \gamma^k | Q_0(x,a) - Q^*(x,a) | \le \epsilon_k .
\end{equation*}
For $x \in \cX$, write $a^* := \pi^*(x)$. For any $a \in \cA$, we deduce that
\begin{equation*}
Q_k(x,a^*) - Q_k(x, a) \ge Q^*(x, a^*) - Q^*(x, a) - 2 \epsilon_k .
\end{equation*}
It follows that if $x \in \cX_k$, then also $Q_k(x, a^*) > Q_k(x, a')$ for all $a' \ne \pi^*(x)$: for these states, the greedy policy $\pi_k(x) := \argmax_a Q_k(x, a)$ corresponds to the optimal policy $\pi^*$.

\begin{lem}\label{lem:x_k_grows_monotonically}
For each $x \in \cX$ there exists a $k$ such that, for all $k' \ge k$, $x \in \cX_{k'}$, and in particular $\argmax_a Q_k(x,a) = \pi^*(x)$.
\end{lem}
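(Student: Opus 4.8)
The plan is to exploit the shrinking of the thresholds $\epsilon_k$ together with the strict suboptimality of non-optimal actions that is forced by the standing uniqueness hypothesis. Since $\gamma < 1$, the thresholds $\epsilon_k = \gamma^k \Vrange$ are nonincreasing and tend to $0$. Fix an arbitrary state $x$ and write the optimality gap $\Delta(x) := Q^*(x,\pi^*(x)) - \max_{a \ne \pi^*(x)} Q^*(x,a)$. The entire argument then reduces to two ingredients: (i) showing $\Delta(x) > 0$, and (ii) choosing $k$ large enough that $2\epsilon_k < \Delta(x)$, at which point $x$ enters $\cXk$ and, by monotonicity, never leaves.

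First I would record the monotone containment $\cX_k \subseteq \cX_{k'}$ for every $k' \ge k$. This is immediate from the definition \eqnref{defn_cxk}: because $\epsilon_{k'} \le \epsilon_k$, the defining inequality $\Delta(x) > 2\epsilon_{k'}$ is weaker than $\Delta(x) > 2\epsilon_k$, so membership at time $k$ forces membership at all later times. Next I would invoke the assumption currently in force, namely that $\pi^*$ is the unique (hence deterministic) optimal policy. Uniqueness forces $\pi^*(x)$ to be the \emph{strict} maximizer of $Q^*(x,\cdot)$ at every state: if two distinct actions both attained $\max_a Q^*(x,a)$, one could assemble two distinct optimal policies, contradicting uniqueness. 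Hence $\Delta(x) > 0$ for every $x$.

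Combining these, since $\epsilon_k \to 0$ there is a finite $k$ with $2\epsilon_k < \Delta(x)$, i.e. $x \in \cXk$, and by the monotone containment $x \in \cX_{k'}$ for all $k' \ge k$. The remaining clause $\argmax_a Q_k(x,a) = \pi^*(x)$ (and likewise for every $k' \ge k$) then follows directly from the observation established just before the lemma: membership in $\cX_{k'}$ together with the bound $|Q_{k'}(x,a) - Q^*(x,a)| \le \epsilon_{k'}$ supplied by Lemma \ref{lem:exponential_convergence_of_mean} yields $Q_{k'}(x,\pi^*(x)) > Q_{k'}(x,a')$ for all $a' \ne \pi^*(x)$. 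The one genuine subtlety, and the sole place where the argument leans on the simplifying hypothesis, is ingredient (i). Crucially the statement is \emph{pointwise}, so I need only $\Delta(x) > 0$ for each individual $x$ rather than a uniform lower bound over $\cX$; but even this pointwise positivity is exactly what the unique-optimal-policy assumption buys, since with ties the gap can vanish and no finite $k$ would ever capture $x$. I would note that the general case of several optimal policies is handled separately, as the surrounding text indicates.
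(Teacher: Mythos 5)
Your proof is correct and follows essentially the same route as the paper's: positivity of the gap $\Delta(x)$ (which the paper attributes to finiteness of $\cA$ and the standing unique-optimal-policy assumption), followed by choosing $k$ so that $\epsilon_k = \gamma^k \Vrange < \Delta(x)/2$. You are in fact somewhat more careful than the paper, which leaves implicit both the monotone containment $\cX_k \subseteq \cX_{k'}$ and the tie-breaking argument showing uniqueness of $\pi^*$ forces $\Delta(x) > 0$.
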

\begin{proof}
Because $\cA$ is finite, the gap
\begin{equation*}
\Delta(x) := Q^*(x,\pi^*(x)) - \max_{a \ne \pi^*(x)} Q^*(x,a)
\end{equation*}
is attained for some strictly positive $\Delta(x) > 0$. By definition, there exists a $k$ such that
\begin{equation*}
\epsilon_k = \gamma^k B < \frac{\Delta(x)}{2},
\end{equation*}
and hence every $x \in \cX$ must eventually be in $\cX_k$.
\end{proof}
This lemma allows us to guarantee the existence of an iteration $k$ after which sufficiently many states are well-behaved, in the sense that the greedy policy at those states chooses the optimal action. We will call these states ``solved''. We in fact require not only these states to be solved, but also most of their successors, and most of the successors of those, and so on. We formalize this notion as follows: fix some $\delta > 0$, let $\cX_{k,0} := \cX_k$, and define for $i > 0$ the set
\begin{equation*}
\cX_{k,i} := \big \{ x : x \in \cXk, P(\cX_{k-1,i-1} \cbar x, \pi^*(x)) \ge 1 - \delta \big \},
\end{equation*}
As the following lemma shows, any $x$ is eventually contained in the recursively-defined sets $\cX_{k,i}$, for any $i$.

\begin{lem}\label{lem:x_k_i_grows_monotonically}
For any $i \in \bN$ and any $x \in \cX$, there exists a $k$ such that for all $k' \ge k$, $x \in \cX_{k',i}$.
\end{lem}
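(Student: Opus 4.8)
The natural route is induction on $i$. The base case $i = 0$ is immediate, since $\cX_{k,0} = \cXk$ by definition and the claim for $i = 0$ is then exactly Lemma~\ref{lem:x_k_grows_monotonically}. So suppose the statement holds for $i - 1$: for every $y \in \cX$ there is some $k(y)$ with $y \in \cX_{k',i-1}$ for all $k' \ge k(y)$. Fix a state $x$. Unpacking the definition, $x \in \cX_{k',i}$ amounts to two requirements: (i) $x \in \cX_{k'}$, and (ii) $P(\cX_{k'-1,i-1} \cbar x, \pi^*(x)) \ge 1 - \delta$. Requirement (i) is handled directly by Lemma~\ref{lem:x_k_grows_monotonically}, which supplies a $k_0$ with $x \in \cX_{k'}$ for every $k' \ge k_0$ (consistent with the fact that, for $\gamma < 1$, $\epsilon_k = \gamma^k \Vrange$ is decreasing and so $\cXk$ is nondecreasing in $k$). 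All the real work is in requirement (ii).

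For (ii), the plan is to read the transition probability as an expectation, $P(\cX_{k-1,i-1} \cbar x, \pi^*(x)) = \expect[\indic{Y \in \cX_{k-1,i-1}}]$ with $Y \sim P(\cdot \cbar x, \pi^*(x))$, and push $k \to \infty$ through the integral. The induction hypothesis says precisely that, for each fixed $y$, the indicator $\indic{y \in \cX_{k,i-1}}$ is eventually and permanently $1$, hence converges to $1$ pointwise; since it is bounded by $1$ and the integrating measure is a probability measure, dominated convergence gives $P(\cX_{k-1,i-1} \cbar x, \pi^*(x)) \to 1$, so there is a threshold beyond which this probability exceeds $1 - \delta$.

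The step I expect to be the main obstacle is making the ``for all $k' \ge k$'' quantifier rigorous, because $k \mapsto P(\cX_{k,i-1} \cbar x, \pi^*(x))$ need not be monotone: the induction hypothesis only controls each $y$ eventually, and different successor states cross their thresholds at different times, so the probability could in principle dip below $1 - \delta$ after first exceeding it. I would circumvent this by replacing the indicator with its monotone lower envelope $g_k(y) := \inf_{k' \ge k} \indic{y \in \cX_{k',i-1}}$, which increases to $1$ pointwise in $y$. Monotone convergence then yields $\int g_k \, dP(\cdot \cbar x, \pi^*(x)) \to 1$, and because $g_k(y) \le \indic{y \in \cX_{k',i-1}}$ for every $k' \ge k$, a single threshold $k_1$ with $\int g_{k_1} \, dP \ge 1 - \delta$ forces $P(\cX_{k'-1,i-1} \cbar x, \pi^*(x)) \ge 1 - \delta$ simultaneously for all $k' \ge k_1 + 1$. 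Setting $k := \max(k_0, k_1 + 1)$ then secures both (i) and (ii) for all $k' \ge k$, completing the induction.

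Finally, a bookkeeping point to verify is measurability: the quantities $P(\cX_{k-1,i-1} \cbar x, \pi^*(x))$ only make sense if each $\cX_{k,i}$ is measurable. This follows by the same induction, since $\cX_{k}$ is a superlevel set of (measurable) functions of $Q^*$ and each $\cX_{k,i}$ is the intersection of $\cX_k$ with a superlevel set of $x \mapsto P(\cX_{k-1,i-1} \cbar x, \pi^*(x))$; this is exactly the place where the hypothesis that $\cX$ is measurable is used.
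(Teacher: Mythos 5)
Your proposal is correct and follows essentially the same route as the paper's proof: induction on $i$, with the base case supplied by Lemma~\ref{lem:x_k_grows_monotonically} and the inductive step obtained by passing a limit through the transition kernel $P(\cdot \cbar x, \pi^*(x))$. The only divergence is your lower-envelope device $g_k$ for the ``for all $k' \ge k$'' quantifier, which the paper renders unnecessary by folding monotonicity into its induction hypothesis $\cX_{k,i} \uparrow \cX$: since $\epsilon_k = \gamma^k \Vrange$ is decreasing, $\cXk$ is nondecreasing in $k$, this nestedness propagates through the recursive definition of $\cX_{k,i}$, so $k \mapsto P(\cX_{k-1,i-1} \cbar x, \pi^*(x))$ is itself nondecreasing and continuity from below already yields permanence --- though your envelope argument is sound and has the mild advantage of not relying on that monotonicity.
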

\begin{proof}
Fix $i$ and let us suppose that $\cX_{k,i} \uparrow \cX$. By Lemma \ref{lem:x_k_grows_monotonically}, this is true for $i=0$. We infer that for any probability measure $P$ on $\cX$, $P(\cX_{k,i}) \to P(\cX) = 1$. In particular, for a given $x \in \cX_k$, this implies that
\begin{equation*}
P(\cX_{k, i} \cbar x, \pi^*(x)) \to P(\cX \cbar x, \pi^*(x)) = 1.
\end{equation*}
Therefore, for any $x$, there exists a time after which it is and remains a member of $\cX_{k,i+1}$, the set of states for which $P(\cX_{k-1, i} \cbar x, \pi^*(x)) \ge 1 - \delta$. We conclude that $\cX_{k,i+1} \uparrow \cX$ also. The statement follows by induction.
\end{proof}

\begin{proof}[Proof of Theorem \ref{thm:control_convergence}]
The proof is similar to policy iteration-type results, but requires more care in dealing with the metric and the possibly infinite state space.
We will write $W_k(x) := Z_k(x, \pi_k(x))$, define $W^*$ similarly and with some overload of notation write $\cT W_k(x) := W_{k+1}(x) = \cT Z_k(x, \pi_{k+1}(x))$.
Finally, let $S^k_i(x) := \indic{x \in \cX_{k,i}}$ and $\bar S^k_i(x) = 1 - S^k_i(x)$. 

Fix $i > 0$ and $x \in \cX_{k+1,i+1} \subseteq \cX_k$. We begin by using Lemma \ref{lem:partition_lemma} to separate the transition from $x$ into a solved term and an unsolved term:
\begin{equation*}
P^{\pi_k} W_k(x) = S^k_i W_k(X') + \bar S^k_i W_k(X'),
\end{equation*}
where $X'$ is the random successor from taking action $\pi_k(x) := \pi^*(x)$, and we write $S^k_i = S^k_i(X'), \bar S^k_i = \bar S^k_i(X')$ to ease the notation. Similarly,
\begin{equation*}
P^{\pi_k} W^*(x) = S^k_i W^*(X') + \bar S^k_i W^*(X') .
\end{equation*}
Now
\begin{align}
d_p(W_{k+1}(x), W^*(x)) &= d_p(\cT W_k (x), \cT W^*(x)) \nonumber \\
&\hspace{-7em} \overset{(a)}{\le} \gamma d_p(P^{\pi_k} W_k(x), P^{\pi^*} W^*(x)) \nonumber \\  %
&\hspace{-7em} \overset{(b)}{\le} \gamma d_p( S^k_i W_k(X'), S^k_i W^*(X')) \nonumber \\
&\hspace{-6em} + \gamma d_p (\bar S^k_i W_k(X'), \bar S^k_i W^*(X')), \label{eqn:partition_dp}
\end{align}
where in $(a)$ we used Properties P1 and P2 of the Wasserstein metric, and in (b) we separate states for which $\pi_k = \pi^*$ from the rest using Lemma \ref{lem:partition_lemma} ($\{S^k_i, \bar S^k_i \}$ form a partition of $\Omega$).
Let $\delta_i := \Pr \{ X' \notin \cX_{k,i} \} = \expect \{ \bar S^k_i(X') \} = \pnorm{\bar S_i^k (X')}$.
From property P3 of the Wasserstein metric, we have
\begin{align*}
d_p (\bar S^k_i W_k(X'), \bar S^k_i W^*(X')) &\\
&\hspace{-7em} \le \sup_{x'} d_p(\bar S^k_i (X') W_k(x'), \bar S^k_i (X') W^*(x')) \\
&\hspace{-7em} \le \pnorm{\bar S_i^k(X')} \sup_{x'} d_p(W_k(x'), W^*(x')) \\
&\hspace{-7em} \le \delta_i \sup_{x'} d_p(W_k(x'), W^*(x')) \\
&\hspace{-7em} \le \delta_i B .
\end{align*}
Recall that $B < \infty$ is the largest attainable $\infnorm{Z}$. Since also $\delta_i < \delta$ by our choice of $x \in \cX_{k+1, i+1}$, we can upper bound the second term in \eqnref{partition_dp} by $\gamma \delta B$. This yields
\begin{align*}
d_p(W_{k+1}(x), W^*(x)) &\le \\
&\hspace{-6em} \gamma d_p(S^k_i W_k(X'), S^k_i W^*(X')) + \gamma \delta B .
\end{align*}
By induction on $i > 0$, we conclude that for $x \in \cX_{k+i, i}$ and some random state $X''$ $i$ steps forward,
\begin{align*}
d_p(W_{k+i}(x), W^*(x)) &\le \\
&\hspace{-6em} \gamma^i d_p(S^k_0 W_k(X''), S^k_0 W^*(X'')) + \frac{\delta B}{1 - \gamma} \\
&\hspace{-6em} \le \gamma^i B + \frac{\delta B}{1 - \gamma} .
\end{align*}
Hence for any $x \in \cX$, $\epsilon > 0$, we can take $\delta$, $i$, and finally $k$ large enough to make $d_p(W_k(x), W^*(x)) < \epsilon$. The proof then extends to $Z_k(x,a)$ by considering one additional application of $\cT$.

We now consider the more general case where there are multiple optimal policies. We expand the definition of $\cX_{k,i}$ as follows:
\begin{small}
\begin{equation*}
\cX_{k,i} := \big \{ x \in \cXk :\forall \pi^* \in \Pi^*, \; \;\mathclap{\expect_{a^* \sim \pi^*(x)}} \; \; P(\cX_{k-1,i-1} \cbar x, a^*) \ge 1 - \delta  \big \},
\end{equation*}
\end{small}
Because there are finitely many actions, Lemma \ref{lem:x_k_i_grows_monotonically} also holds for this new definition. As before, take $x \in \cX_{k,i}$, but now consider the sequence of greedy policies $\pi_{k}, \pi_{k-1}, \dots$ selected by successive applications of $\cT$, and write
\begin{equation*}
\cT^{\bar \pi_k} := \cT^{\pi_k} \cT^{\pi_{k-1}} \cdots \cT^{\pi_{k-i+1}},
\end{equation*}
such that
\begin{equation*}
Z_{k+1} = \cT^{\bar \pi_k} Z_{k-i+1} .
\end{equation*}
Now denote by $\cZ^{**}$ the set of nonstationary optimal policies. If we take any $Z^* \in \cZ^*$, we deduce that
\begin{equation*}
\inf_{Z^{**} \in \cZ^{**}} d_p(\cT^{\bar \pi_k} Z^*(x,a), Z^{**}(x,a)) \le \frac{\delta B}{1 - \gamma},
\end{equation*}
since $Z^*$ corresponds to some optimal policy $\pi^*$ and $\bar \pi_k$ is optimal along most of the trajectories from $(x,a)$. In effect, $\cT^{\bar \pi_k} Z^*$ is close to the value distribution of the nonstationary optimal policy $\bar \pi_k \pi^*$. Now for this $Z^*$,
\begin{align*}
\inf_{Z^{**}} d_p(Z_k(x,a), Z^{**}(x,a)) &\\
&\hspace{-8em}\le d_p(Z_k(x,a), \cT^{\bar \pi_k} Z^{*}(x,a)) \\
&\hspace{-7em}+ \inf_{Z^{**}} d_p(\cT^{\bar \pi_k} Z^*(x,a), Z^{**}(x,a)) \\
&\hspace{-8em}\le d_p(\cT^{\bar \pi_k} Z_{k-i+1}(x,a), \cT^{\bar \pi_k} Z^{*}(x,a)) + \frac{\delta B}{1 - \gamma} \\
&\hspace{-8em}\le \gamma^i B + \frac{2 \delta B}{1-\gamma},
\end{align*}
using the same argument as before with the newly-defined $\cX_{k,i}$.
It follows that
\begin{equation*}
\inf_{Z^{**} \in \cZ^{**}} d_p(Z_k(x,a), Z^{**}(x,a)) \to 0.
\end{equation*}
When $\cX$ is finite, there exists a fixed $k$ after which $\cX_k = \cX$. The uniform convergence result then follows.

To prove the uniqueness of the fixed point $Z^*$ when $\cT$ selects its actions according to the ordering $\prec$, we note that
for any optimal value distribution $Z^*$, its set of greedy policies is $\Pi^*$. Denote by $\pi^*$ the policy coming first in the ordering over $\Pi^*$. Then $\cT = \cT^{\pi^*}$, which has a unique fixed point (Section \ref{sec:policy_evaluation}).
\end{proof}

\setcounter{prop}{3}

\begin{prop}
That $\cT$ has a fixed point $Z^* = \cT Z^*$ is insufficient to guarantee the convergence of $\{ Z_k \}$ to $\cZ^*$.
\end{prop}
We provide here a sketch of the result. Consider a single state $x_1$ with two actions, $a_1$ and $a_2$ (Figure \ref{fig:nonstationary_example}). The first action yields a reward of $1/2$, while the other either yields $0$ or $1$ with equal probability, and both actions are optimal. Now take $\gamma = 1/2$ and write $R_0, R_1, \dots$ for the received rewards. Consider a stochastic policy that takes action $a_2$ with probability $p$. For $p = 0$, the return is
\begin{equation*}
Z_{p=0} = \frac{1}{1 - \gamma} \frac{1}{2} = 1.
\end{equation*}
For $p = 1$, on the other hand, the return is random and is given by the following fractional number (in binary):
\begin{equation*}
Z_{p=1} = R_0 . R_1 R_2 R_3 \cdots .
\end{equation*}
As a result, $Z_{p=1}$ is uniformly distributed between $0$ and $2$! In fact, note that
\begin{equation*}
Z_{p=0} = 0 . 1 1 1 1 1 \cdots = 1.
\end{equation*}
For some intermediary value of $p$, we obtain a different probability of the different digits, but always putting some probability mass on all returns in $[0, 2]$.

\begin{figure}
\begin{center}
\includegraphics[width=1.4in]{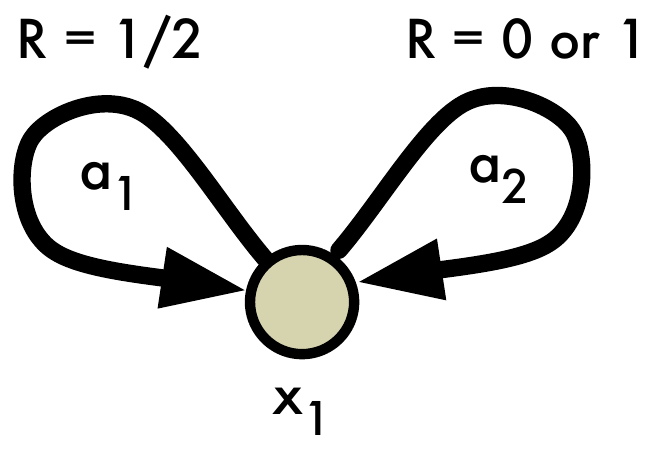}
\end{center}
\caption{A simple example illustrating the effect of a nonstationary policy on the value distribution.\label{fig:nonstationary_example}}
\end{figure}

Now suppose we follow the nonstationary policy that takes $a_1$ on the first step, then $a_2$ from there on. By inspection, the return will be uniformly distributed on the interval $[1/2, 3/2]$, which does not correspond to the return under any value of $p$.
But now we may imagine an operator $\cT$ which alternates between $a_1$ and $a_2$ depending on the exact value distribution it is applied to, which would in turn converge to a nonstationary optimal value distribution.

\begin{lem}[Sample Wasserstein distance]\label{prop:wasserstein_sgd}
Let $\{ P_i \}$ be a collection of random variables, $I \in \bN$ a random index independent from $\{ P_i \}$, and consider the mixture random variable $P = P_I$. For any random variable $Q$ independent of $I$,
\begin{equation*}
d_p(P, Q) \le \expect_{i \sim I} d_p(P_i, Q),
\end{equation*}
and in general the inequality is strict and
\begin{equation*}
\nabla_Q d_p(P_I, Q) \ne \expect_{i \sim I} \nabla_Q d_p(P_i, Q) .
\end{equation*}
\end{lem}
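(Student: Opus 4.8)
The plan is to establish the inequality first and then construct an explicit counterexample witnessing strictness and the failure of the gradient identity. For the inequality $d_p(P,Q) \le \expect_{i \sim I} d_p(P_i, Q)$, I would exploit the fact that the Wasserstein metric is defined via an infimum over couplings. Fix, for each $i$, an optimal coupling of $(P_i, Q)$ realizing $d_p(P_i, Q)$; call the coupled pair $(P_i, Q_i)$ with $Q_i \Deq Q$. The idea is to build a single joint coupling of $(P, Q)$ by sampling $I$ first, then drawing from the $I$-th optimal coupling. Concretely, set $P := P_I$ and let the coupled copy of $Q$ be $Q_I$ drawn from the $I$-th coupling. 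Since each $Q_i \Deq Q$ and $I$ is independent of everything, the mixture of these copies still has law $F_Q$, so this is a valid (in general suboptimal) coupling for $d_p(P, Q)$. Because $d_p$ is an infimum over couplings, it is upper bounded by the $L_p$ norm achieved by this particular one; applying the conditional version of Jensen / the tower property over $I$ then yields $d_p^p(P, Q) \le \expect_{i \sim I}\, d_p^p(P_i, Q)$, which after the power mean comparison (or by running the argument directly in the $\dotp{}{}$-linear form for the expectation) gives the stated bound.

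The heart of the statement, and the main obstacle, is showing that the inequality is \emph{strict} in general and that consequently the naive sample-based gradient is biased, i.e. $\nabla_Q d_p(P_I, Q) \ne \expect_{i \sim I} \nabla_Q d_p(P_i, Q)$. Here I would not argue abstractly but instead produce a minimal explicit example. The cleanest choice is $p = 1$ (or $p=2$) with two deterministic components: let $I$ be uniform on $\{1, 2\}$ with $P_1 = -1$ and $P_2 = +1$ (two point masses), so that $P = P_I$ is the symmetric two-point mixture on $\{-1, +1\}$. Take $Q$ to be a point mass at a location $q$. A direct computation shows $\expect_{i \sim I} d_p(P_i, Q)$ equals the average of $|q + 1|$ and $|q - 1|$ (for $p=1$), whereas $d_p(P, Q)$ couples the spread-out mixture $P$ against the single atom $q$ and is strictly smaller whenever $q$ lies in the interval $(-1, 1)$. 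This separates the two sides and establishes strictness.

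Finally, to obtain the gradient inequality I would differentiate both sides of this same example with respect to the parameter $q$ locating $Q$. The right-hand side $\expect_{i\sim I} \nabla_q d_p(P_i, q)$ averages the gradients of $|q+1|$ and $|q-1|$; at $q = 0$ these are $+1$ and $-1$, averaging to $0$. The left-hand side $\nabla_q d_p(P, q)$ is the gradient of the single-coupling distance, which does not vanish at generic points and in particular disagrees with the averaged value. The essential conceptual point to emphasize is \emph{why} this mismatch occurs: the optimal coupling for the mixture $P$ against $Q$ reorders mass across the components $P_i$, so the sorting that defines the inverse-c.d.f. representation of $d_p$ is not the concatenation of the per-component sortings. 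This non-additivity of the optimal transport plan under mixtures is exactly what breaks both the equality and the exchange of gradient and expectation, and it is this reordering phenomenon that constitutes the crux of the argument; the rest is routine verification.
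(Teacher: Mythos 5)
Your counterexample for strictness does not work, and since the gradient claim is derived from the same example, it collapses with it. When $Q$ is a point mass $\delta_q$, there is only one coupling of any $P$ with $Q$ (all mass must transport to $q$), so no reordering across components is possible --- and this is precisely the reordering phenomenon you correctly identify as the crux. Concretely, for $p=1$ your example gives $d_1(P, \delta_q) = \tfrac{1}{2}|q+1| + \tfrac{1}{2}|q-1| = \expect_{i \sim I} d_1(P_i, \delta_q)$, an identity in $q$: the two sides are equal everywhere on $[-1,1]$ (both constantly $1$), so neither strictness nor any gradient discrepancy can be extracted; and for $p>1$ the power-mean inequality gives $d_p(P,\delta_q) = \big(\tfrac12 |q+1|^p + \tfrac12 |q-1|^p\big)^{1/p} \ge \tfrac12|q+1| + \tfrac12|q-1|$, i.e. the inequality actually \emph{reverses} on your example. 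The fix is to make $Q$ non-degenerate, which is exactly what the paper does: take $P_1 = \delta_0$, $P_2 = \delta_1$ with $I$ uniform, and $Q$ supported on $\{0,1\}$ with mass $\rho$ at $0$. Then $d_1(P,Q) = |\rho - \tfrac12|$ while $\expect_{i\sim I} d_1(P_i, Q) = \tfrac12$ identically, giving strict inequality for all $\rho \in (0,1)$, and gradients in $\rho$ of $\mathrm{sign}(\rho - \tfrac12)$ versus $0$, which exhibits the bias explicitly.

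Your proof of the inequality itself also has a gap at the last step. The mixture-coupling construction is sound and is essentially the paper's argument in different clothing (the paper routes it through the Partition Lemma with $A_i = \indic{I = i}$ and property P3), but what it delivers is $d_p^p(P,Q) \le \expect_{i\sim I} d_p^p(P_i,Q)$, i.e. $d_p(P,Q) \le \big(\expect_{i\sim I} d_p^p(P_i,Q)\big)^{1/p}$. Your ``power mean comparison'' then runs in the wrong direction: by Jensen, $\big(\expect_i d_p^p\big)^{1/p} \ge \expect_i d_p$, so the bound with the plain expectation does not follow for $p>1$ --- and indeed it can fail there: with deterministic $P_1 = 1$, $P_2 = 2$, $Q = 0$ and $I$ uniform, $d_2(P,Q) = \sqrt{5/2} > 3/2 = \expect_{i\sim I} d_2(P_i,Q)$. (The paper's own write-up glosses the same point, since P3 yields $\|A_i\|_p = \Pr\{I=i\}^{1/p}$ rather than $\Pr\{I=i\}$; the statement is airtight as written at $p=1$, which is the only case the paper's example and its application to sample-based losses actually use.) You should either restrict your argument to $p=1$ --- where Kantorovich duality also gives convexity of $d_1$ in its first argument directly --- or state and prove the bound in its $p$-th-power form.
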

\begin{proof}
We prove this using Lemma \ref{lem:partition_lemma}. Let $A_i := \indic{I = i}$. We write
\begin{align*}
d_p(P, Q) &= d_p(P_I, Q) \\
&= d_p \Big (\sum\nolimits_i A_i P_i, \sum\nolimits_i A_i Q \Big) \\
&\le \sum\nolimits_i d_p(A_i P_i, A_i Q) \\
&\le \sum\nolimits_i \Pr\{I = i\} d_p(P_i, Q) \\
&= \expect\nolimits_I d_P(P_i, Q) .
\end{align*}
where in the penultimate line we used the independence of $I$ from $P_i$ and $Q$ to appeal to property P3 of the Wasserstein metric.

To show that the bound is in general strict, consider the mixture distribution depicted in Figure \ref{fig:wasserstein_counterexample}. We will simply consider the $d_1$ metric between this distribution $P$ and another distribution $Q$. The first distribution is
\begin{equation*}
P = \left \{ \begin{array}{ll}
	0 & \text{w.p. } 1/2 \\
	1 & \text{w.p. } 1/2 .
	\end{array} \right .
\end{equation*}
In this example, $i \in \{1, 2\}$, $P_1 = 0$, and $P_2 = 1$.
Now consider the distribution with the same support but that puts probability $p$ on $0$:
\begin{equation*}
Q = \left \{ \begin{array}{ll}
	0 & \text{w.p. } p \\
	1 & \text{w.p. } 1 - p .
	\end{array} \right .
\end{equation*}
The distance between $P$ and $Q$ is
\begin{equation*}
d_1(P, Q) = |p - \tfrac{1}{2}| .
\end{equation*}
This is $d_1(P, Q) = \frac{1}{2}$ for $p \in \{0, 1\}$, and strictly less than $\frac{1}{2}$ for any other values of $p$. On the other hand, the corresponding expected distance (after sampling an outcome $x_1$ or $x_2$ with equal probability) is
\begin{equation*}
\expect\nolimits_I d_1(P_i, Q) = \tfrac{1}{2} p + \tfrac{1}{2} (1 - p) = \tfrac{1}{2} .
\end{equation*}
Hence $d_1(P, Q) < \expect\nolimits_I d_1(P_i, Q)$
for $p \in (0, 1)$. This shows that the bound is in general strict. By inspection, it is clear that the two gradients are different.
\end{proof}
\begin{figure}[ht]
\begin{center}
\includegraphics[width=1.4in]{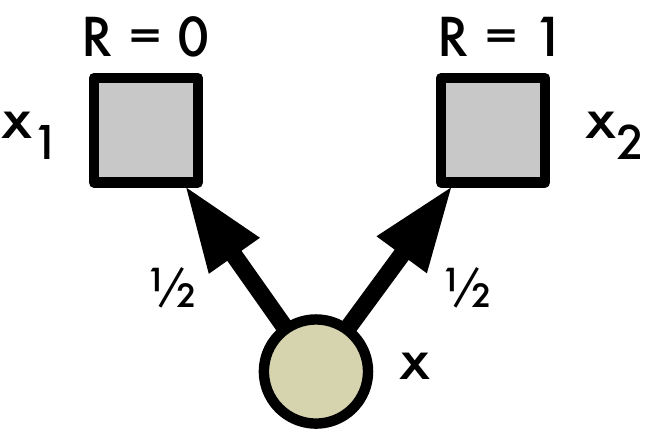}
\end{center}
\caption{Example MDP in which the expected sample Wasserstein distance is greater than the Wasserstein distance.\label{fig:wasserstein_counterexample}}
\end{figure}

\begin{prop}\label{prop:wasserstein_sgd_rl}
Fix some next-state distribution $Z$ and policy $\pi$. Consider a parametric value distribution $Z_\theta$, and and define the Wasserstein loss
\begin{equation*}
\cL_W(\theta) := d_p(Z_\theta(x,a), R(x,a) + \gamma Z(X', \pi(X'))) .
\end{equation*}
Let $r \sim R(x,a)$ and $x' \sim P(\cdot \cbar x,a)$ and consider the sample loss
\begin{equation*}
L_W(\theta, r, x') := d_p(Z_\theta(x, a), r + \gamma Z(x', \pi(x')) .
\end{equation*}
Its expectation is an upper bound on the loss $\cL_W$:
\begin{equation*}
\cL_W(\theta) \le \expect_{R, P} L_W(\theta, r, x'),
\end{equation*}
in general with strict inequality.
\end{prop}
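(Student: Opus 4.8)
The plan is to treat this proposition as a direct corollary of the Sample Wasserstein lemma (Lemma \ref{prop:wasserstein_sgd}). The key observation is that the Bellman target $R(x,a) + \gamma Z(X', \pi(X'))$ is exactly a mixture random variable of the kind appearing in that lemma: conditioning on a realized transition $(r, x')$ produces the component $r + \gamma Z(x', \pi(x'))$, and the full target is recovered by first sampling $(r, x')$ and then drawing from the corresponding component. Recognizing this mixture structure is the entire content of the argument; once it is made explicit, the inequality follows by substitution.

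Concretely, I would set the random index $I := (r, x')$ with law $R(x,a) \times P(\cdot \cbar x, a)$, let the mixture components be $P_I := r + \gamma Z(x', \pi(x'))$ (which remain random through the intrinsic randomness of the next-state distribution $Z$), and take $Q := Z_\theta(x,a)$. Under this dictionary the mixture $P = P_I$ is precisely the target inside $\cL_W$, while the sample loss $L_W(\theta, r, x') = d_p(Z_\theta(x,a), r + \gamma Z(x', \pi(x')))$ is exactly $d_p(Q, P_{(r,x')})$ for a fixed draw of $I$. I would then check the two independence hypotheses of Lemma \ref{prop:wasserstein_sgd}: first, $I$ is independent of the components' internal randomness because $R$, $P$, and the next-state return distribution are assumed mutually independent (the standing assumption of Section \ref{sec:policy_evaluation}); second, $Q = Z_\theta(x,a)$ is a fixed parametric object that does not depend on the sampled transition, hence is independent of $I$.

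With the hypotheses verified, Lemma \ref{prop:wasserstein_sgd} applies verbatim and yields
\begin{equation*}
\cL_W(\theta) = d_p(P, Q) \le \expect_{i \sim I} d_p(P_i, Q) = \expect_{R, P} L_W(\theta, r, x'),
\end{equation*}
which is the claimed bound. For the strictness, I would reuse the mechanism from the proof of Lemma \ref{prop:wasserstein_sgd}: the two-outcome example of Figure \ref{fig:wasserstein_counterexample} is realizable as a one-step MDP whose reward or transition randomness generates exactly that mixture, so the inequality is strict as soon as the target genuinely mixes distinct outcomes and $Z_\theta$ is not degenerate.

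I do not expect a deep obstacle here; the proposition is a packaging of the preceding lemma. The one substantive point requiring care is the independence bookkeeping — ensuring that the sampled transition index $(r, x')$ is independent both of the intrinsic randomness remaining in $Z(X', \pi(X'))$ and of the parametric head $Z_\theta(x,a)$ — since this is exactly what licenses the use of property P3 inside Lemma \ref{prop:wasserstein_sgd}. Getting that alignment right, rather than any analytic difficulty, is the main thing to be careful about.
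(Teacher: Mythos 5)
Your proposal is correct and follows the same route as the paper, which proves this proposition simply by stating that it follows directly from the Sample Wasserstein lemma (Lemma~\ref{prop:wasserstein_sgd}). Your explicit dictionary --- index $I = (r, x')$, components $P_I = r + \gamma Z(x', \pi(x'))$, $Q = Z_\theta(x,a)$ --- together with the independence bookkeeping and the reuse of the lemma's counterexample for strictness, is exactly the implicit content of that one-line reduction, just spelled out.
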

The result follows directly from the previous lemma.

\section{Algorithmic Details}

\begin{figure*}
\begin{center}
\includegraphics[width=\textwidth]{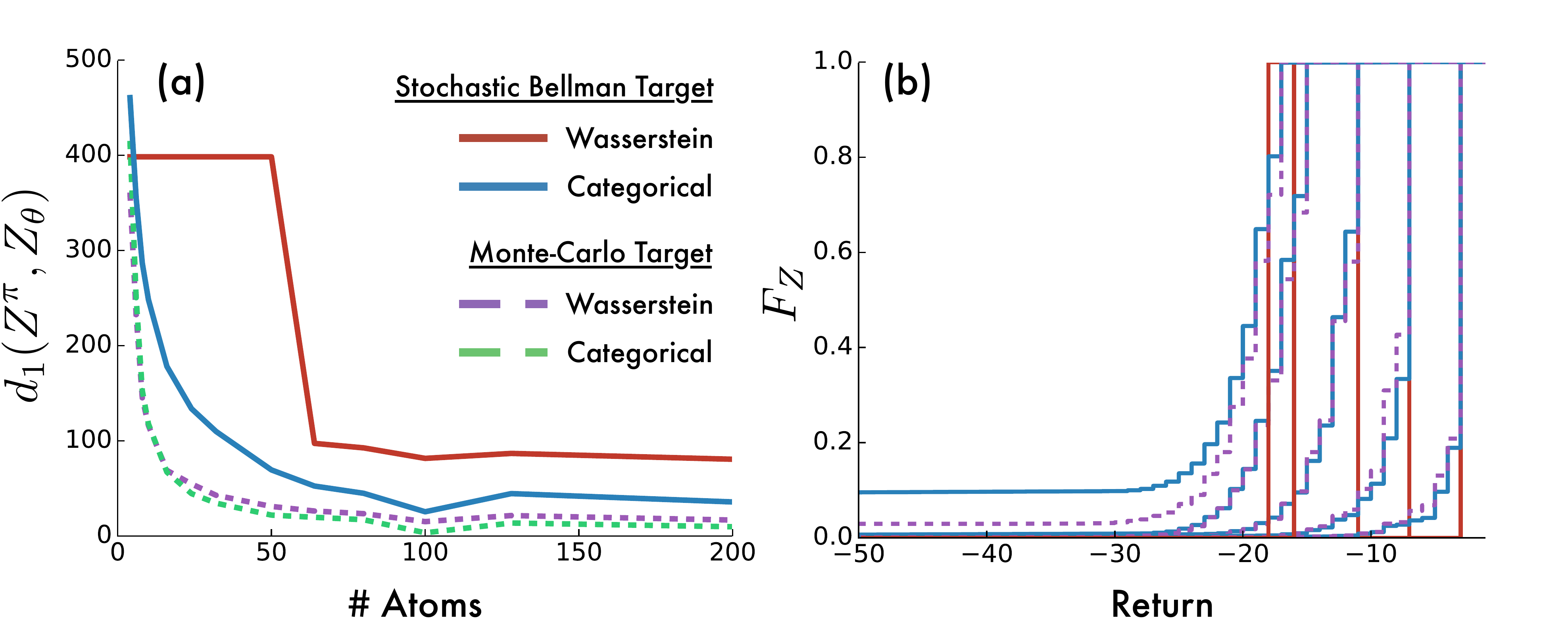}
\caption{(a) Wasserstein distance between ground truth distribution $Z^{\pi}$ and approximating distributions $Z_{\theta}$. Varying number of atoms in approximation, training target, and loss function. (b) Approximate cumulative distributions for five representative states in CliffWalk.}
\label{fig:cw_errors}
\end{center}
\end{figure*}

While our training regime closely follows that of DQN \cite{mnih15nature}, we use Adam \cite{kingma2014adam} instead of RMSProp \cite{tieleman2012lecture} for gradient rescaling. We also performed some hyperparameter tuning for our final results. Specifically, we evaluated two hyperparameters over our five training games and choose the values that performed best. The hyperparameter values we considered were $\Vmax \in \{3, 10, 100\}$ and $\epsilon_{adam} \in \{  1/L, 0.1/L, 0.01/L, 0.001/L,  0.0001/L\}$, where $L=32$ is the minibatch size. We found $\Vmax = 10$ and $\epsilon_{adam} = 0.01/L$ performed best. We used the same step-size value as DQN ($\alpha = 0.00025$).

Pseudo-code for the categorical algorithm is given in Algorithm \ref{alg:cql}. We apply the Bellman update to each atom separately, and then project it into the two nearest atoms in the original support. Transitions to a terminal state are handled with $\gamma_t = 0$.

\section{Comparison of Sampled Wasserstein Loss and Categorical Projection}

\begin{figure}[htb]
\begin{center}
\includegraphics[width=0.45\textwidth]{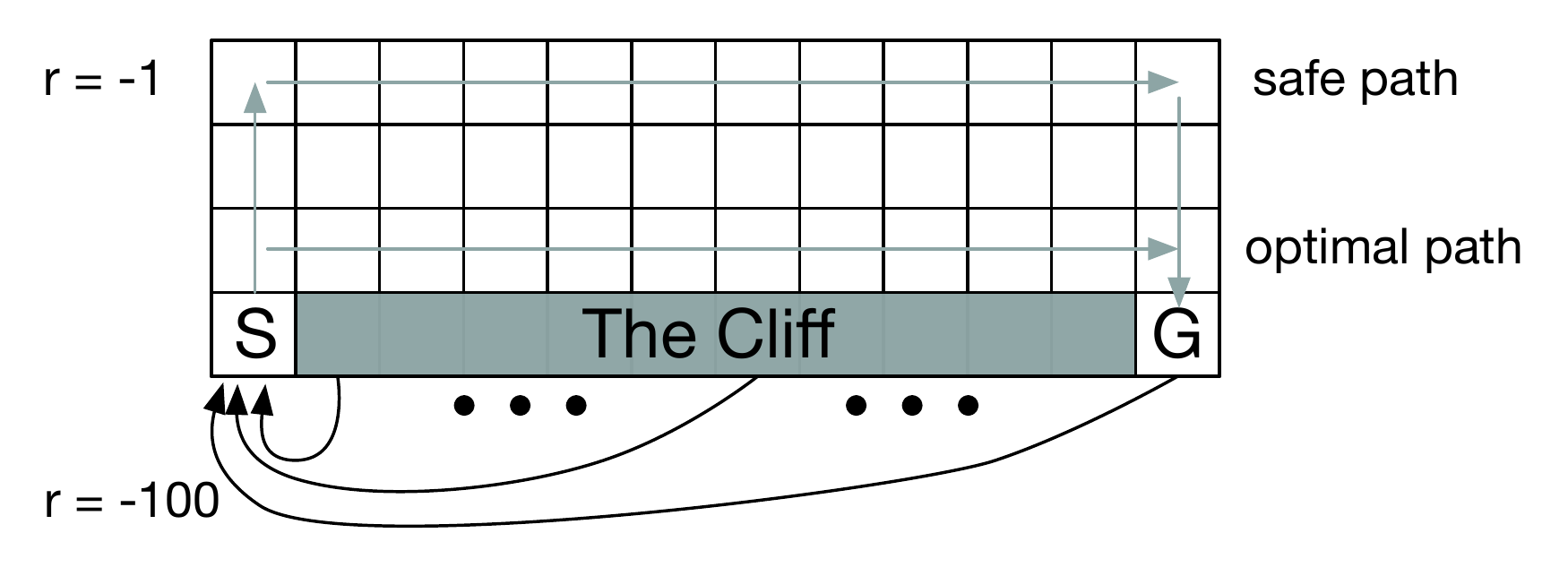}
\caption{CliffWalk Environment \cite{sutton98reinforcement}.}
\label{fig:cliffwalk}
\end{center}
\end{figure}

Lemma~\ref{lem:contraction_pe} proves that for a fixed policy $\pi$ the distributional Bellman operator is a $\gamma$-contraction in $\dip$, and therefore that $\cT^{\pi}$ will converge in distribution to the true distribution of returns $Z^{\pi}$.
In this section, we empirically validate these results on the CliffWalk domain shown in Figure~\ref{fig:cliffwalk}. The dynamics of the problem match those given by \citet{sutton98reinforcement}. We also study the convergence of the distributional Bellman operator under the sampled Wasserstein loss and the categorical projection (Equation~\ref{eqn:cat_proj}) while following a policy that tries to take the safe path but has a 10\% chance of taking another action uniformly at random.

We compute a ground-truth distribution of returns $Z^{\pi}$ using $10000$ Monte-Carlo (MC) rollouts from each state. We then perform two experiments, approximating the value distribution at each state with our discrete distributions.

In the first experiment, we perform supervised learning using either the Wasserstein loss or categorical projection (Equation~\ref{eqn:cat_proj}) with cross-entropy loss. We use $Z^{\pi}$ as the supervised target and perform $5000$ sweeps over all states to ensure both approaches have converged. In the second experiment, we use the same loss functions, but the training target comes from the one-step distributional Bellman operator with sampled transitions. We use $\Vmin = -100$ and $\Vmax = -1$.\footnote{Because there is a small probability of larger negative returns, some approximation error is unavoidable. However, this effect is relatively negligible in our experiments.} For the sample updates we perform 10 times as many sweeps over the state space. Fundamentally, these experiments investigate how well the two training regimes (minimizing the Wasserstein or categorical loss) minimize the Wasserstein metric under both ideal (supervised target) and practical (sampled one-step Bellman target) conditions.

In Figure~\ref{fig:cw_errors}a we show the final Wasserstein distance $d_1(Z^{\pi}, Z_\theta)$ between the learned distributions and the ground-truth distribution as we vary the number of atoms. The graph shows that the categorical algorithm does indeed minimize the Wasserstein metric in both the supervised and sample Bellman setting. It also highlights that minimizing the Wasserstein loss with stochastic gradient descent is in general flawed, confirming the intuition given by Proposition \ref{prop:wasserstein_sgd_rl}. In repeat experiments the process converged to different values of $d_1(Z^\pi, Z_\theta)$, suggesting the presence of local minima (more prevalent with fewer atoms).

Figure~\ref{fig:cw_errors} provides additional insight into why the sampled Wasserstein distance may perform poorly.
Here, we see the cumulative densities for the approximations learned under these two losses for five different states along the safe path in CliffWalk. The Wasserstein has converged to a fixed-point distribution, but not one that captures the true (Monte Carlo) distribution very well. By comparison, the categorical algorithm captures the variance of the true distribution much more accurately.

\section{Supplemental Videos and Results}

In Figure \ref{fig:videos} we provide links to supplemental videos showing the C51 agent during training on various Atari 2600 games. Figure \ref{fig:train_sup} shows the relative performance of C51 over the course of training. Figure \ref{fig:atari_sota} provides a table of evaluation results, comparing C51 to other state-of-the-art agents. Figures \ref{fig:atari_frames0}--\ref{fig:atari_frames4} depict particularly interesting frames.

\begin{figure}[htb]
\begin{center}
\includegraphics[width=0.45\textwidth]{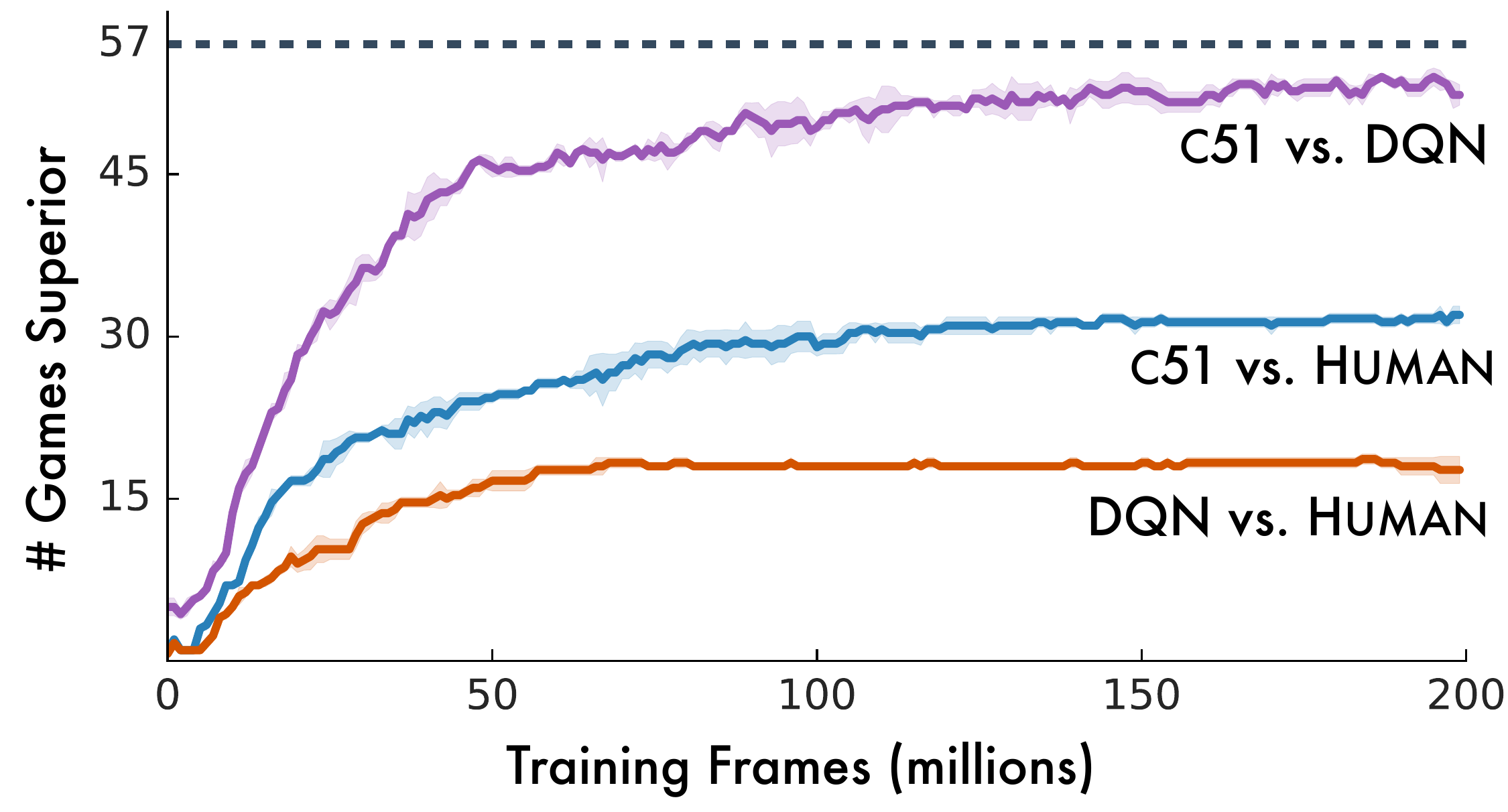}
\caption{Number of Atari games where an agent's training performance is greater than a baseline (fully trained DQN \& human). Error bands give standard deviations, and averages are over number of games.}
\label{fig:train_sup}
\end{center}
\end{figure}

\begin{figure}[htb]
\begin{tabular}{ l | r}
	\textbf{\textsc{games}}	&	\textbf{\textsc{video}} \textbf{\textsc{url}}\\
	\hline
	Freeway	& \url{http://youtu.be/97578n9kFIk} \\
	Pong		& \url{http://youtu.be/vIz5P6s80qA} \\
	Q*Bert	& \url{http://youtu.be/v-RbNX4uETw} \\
	Seaquest & \url{http://youtu.be/d1yz4PNFUjI} \\
	Space Invaders & \url{http://youtu.be/yFBwyPuO2Vg}
\end{tabular}
\caption{Supplemental videos of C51 during training.\label{fig:videos}}
\end{figure}

\newpage

\begin{figure*}
\small
\centering
\begin{tabular}{ l | r|r|r|r|r|r| r }
  \textbf{\textsc{games}}  &  \textbf{\textsc{random}}  &  \textbf{\textsc{human}}  &  \textbf{\textsc{dqn}}  &  \textbf{\textsc{ddqn}}  &  \textbf{\textsc{duel}}  &  \textbf{\textsc{prior.}} \textbf{\textsc{duel.}}  &  \textbf{\textsc{c51}}\\
\hline
  Alien  &   227.8  &  \textbf{\textcolor{blue}{ 7,127.7}}  &   1,620.0  &   3,747.7  &   4,461.4  &   3,941.0  &  3,166\\
  Amidar  &   5.8  &   1,719.5  &   978.0  &   1,793.3  &  \textbf{\textcolor{blue}{ 2,354.5}}  &   2,296.8  &  1,735\\
  Assault  &   222.4  &   742.0  &   4,280.4  &   5,393.2  &   4,621.0  &  \textbf{\textcolor{blue}{ 11,477.0}}  &  7,203\\
  Asterix  &   210.0  &   8,503.3  &   4,359.0  &   17,356.5  &   28,188.0  &   375,080.0  &  \textbf{\textcolor{blue}{406,211}}\\
  Asteroids  &   719.1  &  \textbf{\textcolor{blue}{ 47,388.7}}  &   1,364.5  &   734.7  &   2,837.7  &   1,192.7  &  1,516\\
  Atlantis  &   12,850.0  &   29,028.1  &   279,987.0  &   106,056.0  &   382,572.0  &   395,762.0  &  \textbf{\textcolor{blue}{841,075}}\\
  Bank Heist  &   14.2  &   753.1  &   455.0  &   1,030.6  &  \textbf{\textcolor{blue}{ 1,611.9}}  &   1,503.1  &  976\\
  Battle Zone  &   2,360.0  &  \textbf{\textcolor{blue}{ 37,187.5}}  &   29,900.0  &   31,700.0  &   37,150.0  &   35,520.0  &  28,742\\
  Beam Rider  &   363.9  &   16,926.5  &   8,627.5  &   13,772.8  &   12,164.0  &  \textbf{\textcolor{blue}{ 30,276.5}}  &  14,074\\
  Berzerk  &   123.7  &   2,630.4  &   585.6  &   1,225.4  &   1,472.6  &  \textbf{\textcolor{blue}{ 3,409.0}}  &  1,645\\
  Bowling  &   23.1  &  \textbf{\textcolor{blue}{ 160.7}}  &   50.4  &   68.1  &   65.5  &   46.7  &  81.8\\
  Boxing  &   0.1  &   12.1  &   88.0  &   91.6  &  \textbf{\textcolor{blue}{ 99.4}}  &   98.9  &  97.8\\
  Breakout  &   1.7  &   30.5  &   385.5  &   418.5  &   345.3  &   366.0  &  \textbf{\textcolor{blue}{748}}\\
  Centipede  &   2,090.9  &  \textbf{\textcolor{blue}{ 12,017.0}}  &   4,657.7  &   5,409.4  &   7,561.4  &   7,687.5  &  9,646\\
  Chopper Command  &   811.0  &   7,387.8  &   6,126.0  &   5,809.0  &   11,215.0  &   13,185.0  &  \textbf{\textcolor{blue}{15,600}}\\
  Crazy Climber  &   10,780.5  &   35,829.4  &   110,763.0  &   117,282.0  &   143,570.0  &   162,224.0  &  \textbf{\textcolor{blue}{179,877}}\\
  Defender  &   2,874.5  &   18,688.9  &   23,633.0  &   35,338.5  &   42,214.0  &   41,324.5  &  \textbf{\textcolor{blue}{47,092}}\\
  Demon Attack  &   152.1  &   1,971.0  &   12,149.4  &   58,044.2  &   60,813.3  &   72,878.6  &  \textbf{\textcolor{blue}{130,955}}\\
  Double Dunk  &   -18.6  &   -16.4  &   -6.6  &   -5.5  &   0.1  &   -12.5  &  \textbf{\textcolor{blue}{2.5}}\\
  Enduro  &   0.0  &   860.5  &   729.0  &   1,211.8  &   2,258.2  &   2,306.4  &  \textbf{\textcolor{blue}{3,454}}\\
  Fishing Derby  &   -91.7  &   -38.7  &   -4.9  &   15.5  &  \textbf{\textcolor{blue}{ 46.4}}  &   41.3  &  8.9\\
  Freeway  &   0.0  &   29.6  &   30.8  &   33.3  &   0.0  &   33.0  &  \textbf{\textcolor{blue}{33.9}}\\
  Frostbite  &   65.2  &   4,334.7  &   797.4  &   1,683.3  &   4,672.8  &  \textbf{\textcolor{blue}{ 7,413.0}}  &  3,965\\
  Gopher  &   257.6  &   2,412.5  &   8,777.4  &   14,840.8  &   15,718.4  &  \textbf{\textcolor{blue}{ 104,368.2}}  &  33,641\\
  Gravitar  &   173.0  &  \textbf{\textcolor{blue}{ 3,351.4}}  &   473.0  &   412.0  &   588.0  &   238.0  &  440\\
  H.E.R.O.  &   1,027.0  &   30,826.4  &   20,437.8  &   20,130.2  &   20,818.2  &   21,036.5  &  \textbf{\textcolor{blue}{38,874}}\\
  Ice Hockey  &   -11.2  &  \textbf{\textcolor{blue}{ 0.9}}  &   -1.9  &   -2.7  &   0.5  &   -0.4  &  -3.5\\
  James Bond  &   29.0  &   302.8  &   768.5  &   1,358.0  &   1,312.5  &   812.0  &  \textbf{\textcolor{blue}{1,909}}\\
  Kangaroo  &   52.0  &   3,035.0  &   7,259.0  &   12,992.0  &  \textbf{\textcolor{blue}{ 14,854.0}}  &   1,792.0  &  12,853\\
  Krull  &   1,598.0  &   2,665.5  &   8,422.3  &   7,920.5  &  \textbf{\textcolor{blue}{ 11,451.9}}  &   10,374.4  &  9,735\\
  Kung-Fu Master  &   258.5  &   22,736.3  &   26,059.0  &   29,710.0  &   34,294.0  &  \textbf{\textcolor{blue}{ 48,375.0}}  &  48,192\\
  Montezuma's Revenge  &   0.0  &  \textbf{\textcolor{blue}{ 4,753.3}}  &   0.0  &   0.0  &   0.0  &   0.0  &  0.0\\
  Ms. Pac-Man  &   307.3  &  \textbf{\textcolor{blue}{ 6,951.6}}  &   3,085.6  &   2,711.4  &   6,283.5  &   3,327.3  &  3,415\\
  Name This Game  &   2,292.3  &   8,049.0  &   8,207.8  &   10,616.0  &   11,971.1  &  \textbf{\textcolor{blue}{ 15,572.5}}  &  12,542\\
  Phoenix  &   761.4  &   7,242.6  &   8,485.2  &   12,252.5  &   23,092.2  &  \textbf{\textcolor{blue}{ 70,324.3}}  &  17,490\\
  Pitfall!  &   -229.4  &  \textbf{\textcolor{blue}{ 6,463.7}}  &   -286.1  &   -29.9  &   0.0  &   0.0  &  0.0\\
  Pong  &   -20.7  &   14.6  &   19.5  &   20.9  &  \textbf{\textcolor{blue}{ 21.0}}  &   20.9  &  20.9\\
  Private Eye  &   24.9  &  \textbf{\textcolor{blue}{ 69,571.3}}  &   146.7  &   129.7  &   103.0  &   206.0  &  15,095\\
  Q*Bert  &   163.9  &   13,455.0  &   13,117.3  &   15,088.5  &   19,220.3  &   18,760.3  &  \textbf{\textcolor{blue}{23,784}}\\
  River Raid  &   1,338.5  &   17,118.0  &   7,377.6  &   14,884.5  &  \textbf{\textcolor{blue}{ 21,162.6}}  &   20,607.6  &  17,322\\
  Road Runner  &   11.5  &   7,845.0  &   39,544.0  &   44,127.0  &  \textbf{\textcolor{blue}{ 69,524.0}}  &   62,151.0  &  55,839\\
  Robotank  &   2.2  &   11.9  &   63.9  &   65.1  &  \textbf{\textcolor{blue}{ 65.3}}  &   27.5  &  52.3\\
  Seaquest  &   68.4  &   42,054.7  &   5,860.6  &   16,452.7  &   50,254.2  &   931.6  &  \textbf{\textcolor{blue}{266,434}}\\
  Skiing  &   -17,098.1  &  \textbf{\textcolor{blue}{ -4,336.9}}  &   -13,062.3  &   -9,021.8  &   -8,857.4  &   -19,949.9  &  -13,901\\
  Solaris  &   1,236.3  &  \textbf{\textcolor{blue}{ 12,326.7}}  &   3,482.8  &   3,067.8  &   2,250.8  &   133.4  &  8,342\\
  Space Invaders  &   148.0  &   1,668.7  &   1,692.3  &   2,525.5  &   6,427.3  &  \textbf{\textcolor{blue}{ 15,311.5}}  &  5,747\\
  Star Gunner  &   664.0  &   10,250.0  &   54,282.0  &   60,142.0  &   89,238.0  &  \textbf{\textcolor{blue}{ 125,117.0}}  &  49,095\\
  Surround  &   -10.0  &   6.5  &   -5.6  &   -2.9  &   4.4  &   1.2  &  \textbf{\textcolor{blue}{6.8}}\\
  Tennis  &   -23.8  &   -8.3  &   12.2  &   -22.8  &   5.1  &   0.0  &  \textbf{\textcolor{blue}{23.1}}\\
  Time Pilot  &   3,568.0  &   5,229.2  &   4,870.0  &   8,339.0  &  \textbf{\textcolor{blue}{ 11,666.0}}  &   7,553.0  &  8,329\\
  Tutankham  &   11.4  &   167.6  &   68.1  &   218.4  &   211.4  &   245.9  &  \textbf{\textcolor{blue}{280}}\\
  Up and Down  &   533.4  &   11,693.2  &   9,989.9  &   22,972.2  &  \textbf{\textcolor{blue}{ 44,939.6}}  &   33,879.1  &  15,612\\
  Venture  &   0.0  &   1,187.5  &   163.0  &   98.0  &   497.0  &   48.0  &  \textbf{\textcolor{blue}{1,520}}\\
  Video Pinball  &   16,256.9  &   17,667.9  &   196,760.4  &   309,941.9  &   98,209.5  &   479,197.0  &  \textbf{\textcolor{blue}{949,604}}\\
  Wizard Of Wor  &   563.5  &   4,756.5  &   2,704.0  &   7,492.0  &   7,855.0  &  \textbf{\textcolor{blue}{ 12,352.0}}  &  9,300\\
  Yars' Revenge  &   3,092.9  &   54,576.9  &   18,098.9  &   11,712.6  &   49,622.1  &  \textbf{\textcolor{blue}{ 69,618.1}}  &  35,050\\
  Zaxxon  &   32.5  &   9,173.3  &   5,363.0  &   10,163.0  &   12,944.0  &  \textbf{\textcolor{blue}{ 13,886.0}}  &  10,513
\end{tabular}
\caption{Raw scores across all games, starting with 30 no-op actions. Reference values from \citet{wang2016dueling}.\label{fig:atari_sota}}
\end{figure*}

\clearpage

\newpage

\begin{figure*}[htb]
\begin{center}
\includegraphics[width=0.5\textwidth]{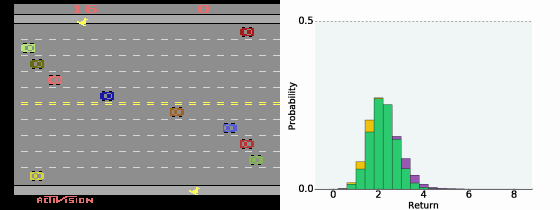}
\hspace{-1em}\includegraphics[width=0.5\textwidth]{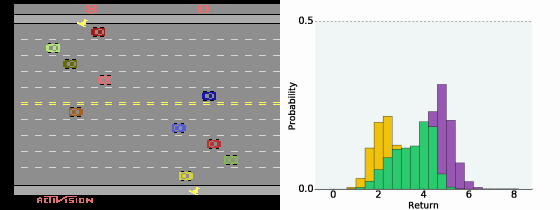}
\caption{\textsc{Freeway}: Agent differentiates action-value distributions under pressure.\label{fig:atari_frames0}}
\end{center}
\end{figure*}

\begin{figure*}[htb]
\begin{center}
\includegraphics[width=0.5\textwidth]{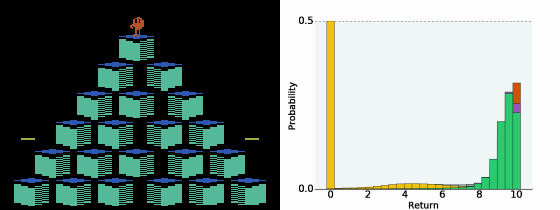}
\hspace{-1em}\includegraphics[width=0.5\textwidth]{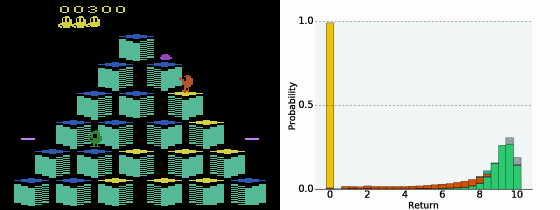}\\
\includegraphics[width=0.5\textwidth]{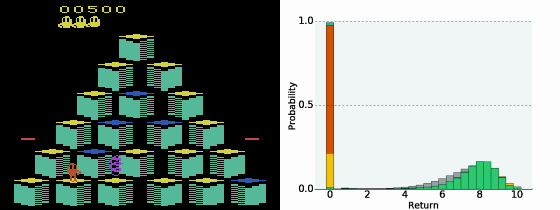}
\hspace{-1em}\includegraphics[width=0.5\textwidth]{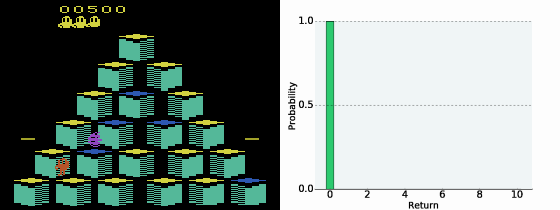}
\caption{\textsc{Q*Bert}: Top, left and right: Predicting which actions are unrecoverably fatal. Bottom-Left: Value distribution shows steep consequences for wrong actions. Bottom-Right: The agent has made a huge mistake.}
\end{center}
\end{figure*}

\begin{figure*}[htb]
\begin{center}
\includegraphics[width=0.33\textwidth]{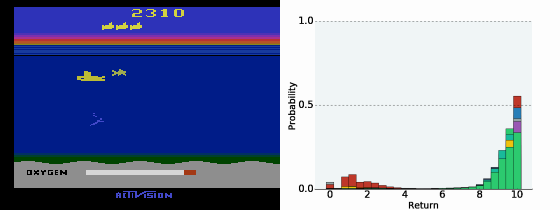}
\hspace{-1em}\includegraphics[width=0.33\textwidth]{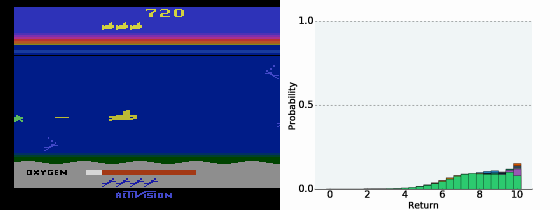}
\hspace{-1em}\includegraphics[width=0.33\textwidth]{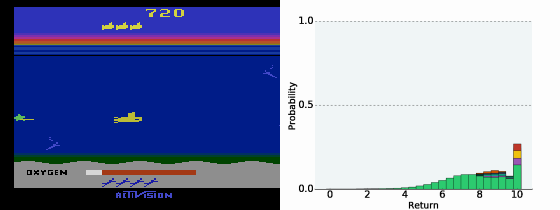}
\caption{\textsc{Seaquest}: Left: Bimodal distribution. Middle: Might hit the fish. Right: Definitely going to hit the fish.}
\end{center}
\end{figure*}

\begin{figure*}[htb]
\begin{center}
\includegraphics[width=0.5\textwidth]{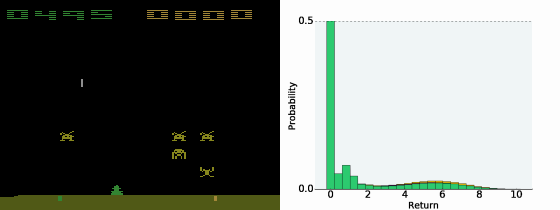}
\hspace{-1em}\includegraphics[width=0.5\textwidth]{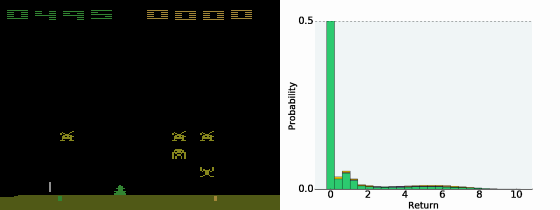}\\
\includegraphics[width=0.33\textwidth]{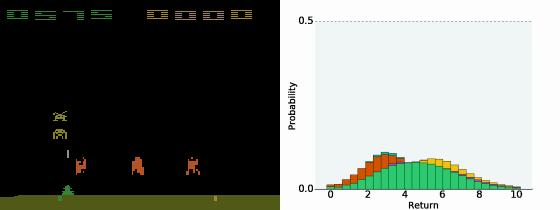}
\hspace{-1em}\includegraphics[width=0.33\textwidth]{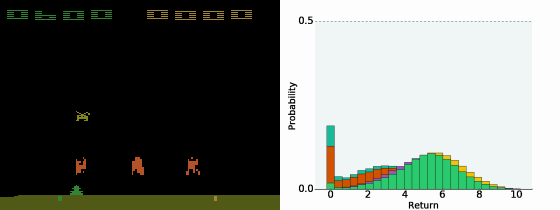}
\hspace{-1em}\includegraphics[width=0.33\textwidth]{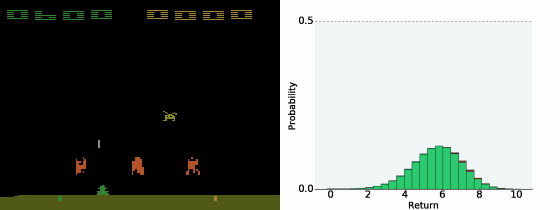}
\caption{\textsc{Space Invaders}: Top-Left: Multi-modal distribution with high uncertainty. Top-Right: Subsequent frame, a more certain demise. Bottom-Left: Clear difference between actions. Bottom-Middle: Uncertain survival. Bottom-Right: Certain success.\label{fig:atari_frames4}}
\end{center}
\end{figure*}

\end{document}